\newcommand{\X}{\cX}
\newcommand{\HC}{\mathcal{H}}
\newcommand{\N}{\mathbb{N}}
\newcommand{\HCC}{\overline{\HC}}
\newcommand{\clos}{\operatorname{clos}}
\newcommand{\VS}{\mathrm{VS}^*}
\newcommand{\Trp}{\mathrm{Trap}}
\newcommand{\iid}{\mathrm{i.i.d.}}
\newcommand{\Ldim}{\mathrm{LDim}}
\newtcolorbox{defbar}{
  enhanced,
  boxrule=0pt,                %
  frame hidden,               %
  borderline west={3pt}{0pt}{black},  %
  colback=white,              %
  left=8pt,                   %
  right=0pt, top=0pt, bottom=0pt,     %
  before skip=10pt,
  after skip=10pt
}
\newglossaryentry{trap}{
    name={trap region},
    description={Is what the learner can create when predicting with different labels on a point, it is unsure of}
}
\newcolumntype{L}[1]{>{\raggedright\let\newline\\\arraybackslash\hspace{0pt}}m{#1}}
\newcolumntype{C}[1]{>{\centering\let\newline\\\arraybackslash\hspace{0pt}}m{#1}}
\newcolumntype{R}[1]{>{\raggedleft\let\newline\\\arraybackslash\hspace{0pt}}m{#1}}
\title{Learning in an Echo Chamber:\\ Online Learning with Replay Adversary}
\date{}
\author{}
\author{%
  Daniil Dmitriev\textsuperscript{1}\thanks{This work was completed while D. Dmitriev was at ETH Zurich.} \quad
  Harald Eskelund Franck\textsuperscript{2} \quad
  Carolin Heinzler\textsuperscript{2} \quad
  Amartya Sanyal\textsuperscript{2}\thanks{Authors listed in alphabetical order.}\\
  \textsuperscript{1}University of Pennsylvania \quad
  \textsuperscript{2}University of Copenhagen
}
\begin{document}

\maketitle\vspace{-30pt}

\begin{abstract}
As machine learning systems increasingly train on self-annotated data, they risk reinforcing errors and becoming echo chambers of their own beliefs. We model this phenomenon by introducing a learning-theoretic framework:~\emph{Online Learning in the Replay Setting}. In round~\(t\), the learner outputs a hypothesis \(\hat h_t\); the adversary then reveals either the true label \(f^\ast\br{x_t}\) or a \emph{replayed} label \(\hat h_i\br{x_t}\) from an earlier round \(i<t\). A mistake is counted only when the true label is shown, yet classical algorithms such as the SOA or the halving algorithm are easily misled by the replayed errors.\\
We introduce the \emph{Extended Threshold dimension}, \(\extdim{\HC}\), and prove matching upper and lower bounds that make
\(\extdim{\HC}\) the exact measure of learnability in this model. A closure‑based learner makes at most \(\extdim{\HC}\) mistakes against any adaptive adversary, and no algorithm can perform better.  
For stochastic adversaries, we prove a similar bound for every intersection‑closed class. 
The replay setting is provably harder than the classical mistake bound setting: some classes have constant Littlestone dimension but arbitrarily large \(\extdim\HC\). Proper learning exhibits an even sharper separation: a class is properly learnable under replay if and only if it is~(almost) intersection‑closed. Otherwise, every proper learner suffers \(\Om{T}\) errors, whereas our improper algorithm still achieves the \(\extdim{\HC}\) bound. These results give the first tight analysis of learning against replay adversaries, based on new results for closure-type algorithms.
\end{abstract}

\section{Introduction}
\label{sec:intro}

Modern machine learning systems are increasingly trained on the outputs of prior versions of the same or closely related models. A detector-based audit of \emph{Medium} and \emph{Quora} posts found that the share of AI-authored text on those platforms grew from about 2\% in early
2022 to 37-39\% by October 2024~\citep{sun2024we}. 
Even tasks that explicitly require human effort show a similar trend: in a controlled study on Amazon Mechanical Turk,
\(33\text{--}46\%\) of workers were found to paste in the output of a large language model instead of writing their own
summaries~\citep{veselovsky2023artificial}. While the precise numbers remain open to debate, it is clear that today’s learners are often trained not on naturally‑generated data but on the outputs of older machine learning models.

This recursive use of synthetic data is already
routine: vision and language models, as part of their training
pipeline, often pseudo-label new data for self-training~\citep{caron2018deep,YM2020Selflabelling}. Large content-moderation pipelines are label-starved, and it is common
for them to operate in a cascade where they use an older model's
decisions as the ground truth~\citep{hu2024visual,park2023towards}. Edge devices personalise a lightweight
predictor by trusting labels already cached locally, because querying
a remote oracle for every instance is infeasible~\citep{he2019device,liu2023cache}. Formally, in each
case the learner receives a stream \((x_t,y_t)\) where \(y_t\) may be
the prediction of an earlier model as opposed to a natural (possibly
noisy) observation. The naïve learner cannot tell whether a given
observation is new information or a \emph{replay}, so early errors risk being reinforced indefinitely, much like misinformation in echo chambers.

A growing body of work has studied this recursive process, often under
the term \emph{model collapse}~\citep{shumailov2024ai}, but this research has focused almost
exclusively on generative models. These analyses show that
training diffusion models or LLMs on recursively generated synthetic data can cause the tails of the true distribution to vanish~\citep{shumailov2023curse}
and the model's generative quality to degrade over time~\citep{alemohammad2024selfconsuming}.  To the best of our knowledge, there has been no work that views this problem through the lens of
classical learning theory, which is the focus of this paper.

To do so, we identify the core components of the problem. The first
two—that learning happens sequentially over time and that the feedback
received by the learner is uncertain—have been widely studied for
decades in the theory of classical online learning in the presence of
noise~\citep{ben2009agnostic}. The important distinction lies in the third component: the
noise is not exogenous (i.e., generated by an independent process) but
is endogenous: it is produced by replaying the learner's own past,
potentially incorrect, output~(hypotheses). We formalise this by adapting the adversary in Littlestone’s mistake‑bound setting, and we refer to our framework as~\emph{Online Learning in the Replay Setting}. 

In our replay setting, after the learner outputs its hypothesis
$\hat{h}_t$, the environment may reveal either the ground-truth label
$f^*(x_t)$ or a replay $\hat{h}_i(x_t)$ for some $i < t$. The learner does not know which it received. The
goal is to  minimise the number of \emph{true-label mistakes}, i.e., mistakes in rounds where \(f^*\) is revealed. We say a
hypothesis class is learnable in the replay setting if there exists an
algorithm which makes a finite number of true-label mistakes even when
the game is played for infinite number of rounds.
We show that there exists a fundamental separation of the replay setting from classical online learning~\citep{littlestone1998}. There exists a simple
hypothesis class (see~\cref{ex:non-int}) on a domain of size \(N\) that is~\emph{properly} online learnable with finite mistakes but a replay adversary forces  any proper learner to incur \(\Omega\br{T}\) true-label mistakes. In the improper setting, the halving algorithm yields \(\bigO{\log N}\) mistakes but in the replay setting we show that any improper learner must incur \(\Om{N}\) mistakes.

Given this inherent separation from classical online learning, which is characterised by the Littlestone dimension~\citep{littlestone1998}, it is natural to ask whether there are structural or combinatorial properties that characterise learnability in the replay setting. We show that for proper learners, \emph{intersection-closedness} and \emph{Threshold dimension}~\citep{shelah1978ClassificationTheoryNumbers} capture the entire story: if a hypothesis class \(\HC\) is intersection-closed with finite Threshold dimension, then it is learnable in the replay setting~(\Cref{thm:general-adver}), and if it is not~(up to some transformations), every proper learner is doomed to make infinitely many true-label mistakes~(\Cref{thm:general-lower-adv-int-clos}). This quantitative result immediately yields a qualitative separation from classical online learning. For all intersection-closed hypothesis classes $\HC$ with \(VC\) dimension \(\vcdim\), the proper online mistake bound is known to be \(\bigO{\ldim{\HC}\vcdim\log\br{\vcdim}}\)~\citep{hanneke2021online,bousquet2020proper} while any proper replay learner must incur at least $\Omega\bigl(\tdim{\mathcal{H}}\bigr)$ mistakes. For improper learners in the replay model, we define a new complexity measure called the~\emph{Extended Threshold dimension}~(\Cref{defn:exthdim}) that yields matching upper and lower bounds for improper learnability of all hypothesis classes in the replay setting. 

\noindent\textbf{Paper Outline:} A brief summary of our main results is given in~\Cref{tab:bound}, and an overview of related work appears in~\Cref{sec:rel-work}.~\Cref{sec:setting} provides a formal technical introduction to the problem and~\Cref{sec:threshold} provides matching upper and lower bounds for the class of thresholds for both adaptive~(\Cref{thm:thresh-adap}) and stochastic~(\Cref{thm:thresh-stoch}) adversaries. In~\Cref{sec:general}, we provide our main results for general hypothesis classes.~\Cref{sec:int-close} introduces concepts such as the Threshold and Extended Threshold dimension~(\Cref{defn:threshdim,defn:exthdim}) and~\Cref{sec:adaptive_ub,sec:stochastic} lists our main result against adaptive and stochastic adversaries respectively. Finally,~\Cref{sec:discussion} ends with a discussion and open questions.

\subsection{Related Work} 
\label{sec:rel-work}

 \noindent \textbf{Online Learning} has been extensively studied under a variety of assumptions. Adaptive adversaries choose $x_t$ after observing the past~\citep{littlestone1998}, whereas stochastic adversaries draw $x_t\stackrel{\mathrm{i.i.d.}}{\sim}\cD$ for an unknown distribution $\cD$ fixed in advance~\citep{rakhlin2011OnlineLearningStochastica}. A related setting is the classical realisable mistake bound model of online learning which assumes every example can be correctly labelled by some unknown \(f^\ast \in \HC\). Performance in this model is measured by the total number of mistakes the learner incurs. The Littlestone dimension $\ldim{\HC}$ exactly characterises \emph{improper} learnability with an optimal mistake bound $\Theta\br{\ldim{\HC}}$. For proper learners, the bound is $\Theta\br{k\ldim{\HC}}$, where $k$ is the Helly number of $\HC$~\citep{bousquet2020proper,hanneke2021online}. The agnostic model drops the realisable assumption~\cite{ben2009agnostic,frenay2013classification,el2010foundations,wiener2011agnostic} and instead measures regret against the best hypothesis in hindsight. In the replay setting, we introduce a restricted noise model: the adversary may output the prediction of an earlier hypothesis of the learner and we measure performance using a variation of the mistake bound.

\noindent\textbf{The Closure Algorithm} is a well known algorithm \cite{helmbold1990learning, haussler1994predicting,auer1998OnlineLearningMalicious} and is central to the replay setting. It is also one of the few consistent learning algorithms known to achieve the optimal sample complexity for PAC learning of intersection-closed classes \cite{auer2007NewPACBound,darnstadt2015OptimalPACBound}. As any hypothesis class embeds into an intersection-closed class via its closure, understanding these structures can produce learners that generalise beyond this family itself \cite{rubinstein2022UnlabelledSampleCompression}. The same algorithm also underlies disagreement-based abstention~\citep{rivest1988learning,geifman2017selective,goel2023AdversarialResilienceSequential} and stream–based active learning \citep{cohn1994improving,hanneke2009theoretical,hanneke2014theory,hanneke2015minimax,hanneke2016refined,gelbhart2019relationship}. In the replay setting the learner must, in addition, preserve consistency within all future disagreement regions to distinguish true labels from replays.

\looseness=-1\noindent\textbf{Performative Prediction} studies the influence of a model's predictions on future outcomes~\citep{perdomo2020PerformativePrediction,miller2021OutsideEchoChamber,hardt2025PerformativePredictionFuture}. Unlike classical concept drift~\citep{kuh1990learning,bartlett1992learning}, the drift here is endogenous, i.e., induced by the learner’s own predictions. While performative prediction allows a gradual drift in the data distribution, the replay model differs in two respects: the data covariates remain unaffected and only the label depends, possibly adversarially, on past outputs.

\section{Technical Introduction via the Threshold Class}
\label{sec:prelim} 

\begin{table}[t]
    \centering
    
 \centering\small
    \begin{tabular}{%
        L{4cm}  C{4cm}  C{7cm}   
    }
      \toprule
      \textbf{Hypothesis Class}
        & \textbf{Adaptive Adv.\ }\(\cM_T\)
        & \textbf{Stochastic Adv.\ }\(\bE\bs{\cM_T}\) \\
      \midrule
      Thresholds on \([N]\) &  \(\Theta\bigl(\min\{N,T\}\bigr)\)(Thm~\ref{thm:thresh-adap})
          &\(\Theta\bigl(\min\{N,\log T\}\bigr)\) (Thm~\ref{thm:thresh-stoch}) \\
      \midrule
        \multirow{2}{3.3cm}{Intersection-Closed~\(\HC\)}
          & \multirow{2}{3cm}{\centering\(\Theta\bigl(\tdim{\HC}\bigr)\)(Thm~\ref{thm:general-adver})}
          & \(O\bigl(\min\{\tdim{\HC},\,\vcdim(\HC)\,\log T\}\bigr)\) \multirow{2}{1cm}{~(Thm~\ref{thm:general-stoch})} \\
          &
          & \(\Omega\bigl(\min\{\tdim{\HC},\,\log T\}\bigr)\) \\
          \midrule
        General~\(\HC\)
          & \(\Theta\bigl(\extdim{\HC}\bigr)\) (Thm~\ref{thm:general-adver})
          & \(\Omega\bigl(\min\{\extdim{\HC},\, \log T\}\bigr)\) (Thm~\ref{thm:general-stoch}) \\
      \bottomrule
    \end{tabular}
    \label{tab:bounds}

    \caption{Let $\cM_T$ be the total number of mistakes up to time $T$. We give upper (\(O(\cdot)\)) and lower (\(\Omega(\cdot)\)) bounds on $\cM_T$ for adaptive and stochastic adversaries, where \(\Theta(\cdot)\) denotes matching upper and lower bounds.}
    \label{tab:bound}\vspace{-10pt}
\end{table}

\noindent\textbf{Notation.} %
We write \( [N] \coloneqq \{1, 2, \ldots, N\} \) for \( N \in \N^+ \).
We use \(2^{\cX}\) to denote \(\{h:\X\rightarrow \{0,1\}\}\), the set of all binary functions on a domain set \(\cX\).
For any set \( A \subseteq \mathcal{X} \), we define the indicator function \( \mathbb{I}\{A\} : \mathcal{X} \to \{0,1\} \) by \( \mathbb{I}\{A\}(x) =\mathbb{I}\{x\in A\}= 1 \) if \( x \in A \), and \( 0 \) otherwise.
For any two functions \(h,f : \mathcal{X} \to \{0,1\}\), the $f$-representation of \(h\) is $ h^{f}:= \bigl\{\,x \in \mathcal{X} \mid h(x) \neq f(x)\bigr\}$, following the notation from \cite{ben-david20152NotesClasses}.
For $f\equiv0$, the set coincides with the function $h$ itself. Throughout, we will use the set and function notation of $h$ interchangeably, depending on the context.

\subsection{Problem Setting}
\label{sec:setting}
We begin by recalling the definition of the standard \emph{mistake bound model}.
\begin{defn}[Mistake Bound Model]
\label{def:online-learning}
    Let \(T \in \bN^+\) be the number of rounds, \(\cX\) be the domain set and \(\cH\subseteq 2^{\cX}\) be a hypothesis class. We define an online game in the \emph{mistake bound model} between a learner $\cA$ and the Nature as follows: For each round $t\in[T]$, Nature produces a sample $x_t\in\X$ and the learner $\cA$ selects a hypothesis $\hat{h}_t\in  2^{\cX}$ and predicts $\hat{y}_t=\hat{h}_t(x_t)$. Nature plays the label $y_t$ and reveals it to $\cA$. The learner incurs a loss of $\bI\bc{y_t\neq\hat{y}_t}$. We assume the realisable setting, i.e. there exists $f^*\in\HC$ such that $y_t=f^*(x_t)$ for all $t\in[T]$.
\end{defn} Building on this framework, we introduce the \emph{replay setting}, a variant of the mistake bound model in which the learner’s past predictions influence the feedback received in future rounds.

\begin{defn}[Replay setting]
\label{def:replay-setting}
Let \(T\in\N^+\) be the number of rounds, \(\cX\) be the domain set and \(\cH\subseteq 2^{\cX}\) be a hypothesis class. We define an online game in the \emph{replay setting} between a learner \(\cA\) and the Nature in the following way:
\end{defn}
\begin{defbar}
\textbf{Online Learning in the Replay Setting}\\
For \(t = 1, \ldots, T\):
\begin{itemize}[leftmargin=*,itemsep=0em,topsep=1pt]
        \item Learner \(\cA\) outputs a hypothesis \(\hat h_t \in 2^{\cX}\);
        \item Nature (adversarial or stochastic) produces a sample \(x_t \in \cX\) and reveals \((x_t, y_t)\) to \(\cA\),
        \begin{equation*}
            y_t=\begin{cases}
            f^*(x_t) & \text{ for some }f^*\in\HC \quad \text{ or}\\
            \hat{h}_i(x_t) & \text{ for some  } i< t %
            \end{cases}
        \end{equation*} 
        \item Learner suffers loss \(\bI\bc{y_t \neq \hat{h}_t(x_t) \text{ and }y_t = f^\ast(x_t)}\), \emph{not revealed} to the learner.
\end{itemize}
Nature picks \(f^\ast \in \cH\) consistent with all rounds $t$ for which \(y_t \notin \bc{\hat h_1(x_t), \ldots, \hat h_{t - 1}(x_t)}\).
\end{defbar}
\noindent Let \(\hat y_t \coloneqq \hat h_t(x_t)\) for \(t \in [T]\). The number of mistakes is computed at the end of the game as follows: 
    \begin{equation}\label{eq:mistakes}
            \cM_T(\cA)  = \sum_{t = 1}^T \bI\bc{y_t \neq \hat y_t \text{ and }y_t = f^\ast(x_t)}
    \end{equation}
\noindent
We refer to the Nature in the above definition as the \emph{replay adversary}. 

The replay setting is a generalisation of the mistake bound model: 
since in~\Cref{def:online-learning}, only the value \(\hat h_t(x_t)\) is used, the learner can equivalently output the hypothesis \(\hat h_t\) \emph{before} the Nature produced a particular sample \(x_t \in \cX\), as adopted in~\Cref{def:replay-setting}.
When the replay adversary reveals $y_t=f^*(x_t)$ in every round $t\in [T]$, we recover~\Cref{def:online-learning}.
Additionally, the replay adversary can label $x_t$ using an earlier hypothesis $\hat{h}_i(x_t)$ for some $i<t$, leading to a more powerful adversary. The learner in the replay setting does not observe whether the sample \(x_t\) was labelled by $f^*$ or a replay hypothesis and therefore does not have access to the number of mistakes made up to step \(t \in [T]\), as defined in~\cref{eq:mistakes}. Otherwise, the learner could have simply disregarded the labels produced by replay hypotheses, reducing the setting to the mistake bound model. 

    Furthermore, the replay adversary is less powerful than the agnostic online learning setting (\cite{ben2009agnostic}), where there is no restriction on how the samples \(x_t\) are labelled by the Nature and the performance of the learner is measured against the best hypothesis in hindsight from \(\cH\). We now recall classical notions from online learning, which characterise both the generation of input samples \( x_t \in \mathcal{X} \) and the restrictions on the learner's hypothesis selection.
\begin{defn}[Adaptive and Stochastic Adversaries]
    \looseness=-1 Consider the replay setting in~\Cref{def:replay-setting}.
    We call the adversary \emph{adaptive} in the case when the Nature selects an arbitrary sample \(x_t \in \cX\) at each round \(t \in [T]\), after observing the history: \(x_1, \ldots, x_{t - 1}\) and \(\hat h_1, \ldots, \hat h_{t - 1}\).
    The case when the Nature at each round \(t\) samples \(x_t \overset{\iid}{\sim} \cD\) for some fixed distribution \(\cD\) over \(\cX\) defines a \emph{stochastic adversary}.
\end{defn}

\begin{defn}[Proper and improper learner]
    A learner \(\cA\) that for each \(t \in [T]\)  outputs \(\hat h_t \in \cH\), is called \emph{proper}. Learners that are not proper are called \emph{improper}.
\end{defn}

In online learning, a \emph{version space} represents the learner's current knowledge of $f^*$. It is the subset of all $h\in\HC$ such that $h$ is  consistent with all previously observed samples. The replay setting requires a slight variation of this concept; we do not seek consistency with \textit{all} previously observed samples (as some of them could be labelled by replay hypotheses), but only with those which the learner is sure are labelled by $f^*$. We therefore introduce the concept of a \textit{reliable} version space:

\begin{defn}[Reliable Version Space] \label{defn:version_space}
Let $\bc{(x_1,y_1),\dots,(x_T,y_T)}$ be the samples observed by the learner, and $\HC_{t}\coloneqq\{\hat h_1,\dots, \hat h_{t}\}$ be the set of produced hypotheses until step \(t \in [T]\). Define the set of indices corresponding to the labels that do not agree with any previously produced hypothesis:
set $I_0=\emptyset$ and for \(t \in [T]\) set $I_{t}=I_{t-1}\cup\{t\}$ if for all $h'\in\HC_{t-1}: h'(x_{t})\neq y_{t}$. Otherwise, set $I_{t}=I_{t-1}$.\\
The \emph{reliable version space} is defined as the set of hypotheses in $\mathcal{H}$ that are consistent with the samples from $I_t$. For \(t \in [T]\),
\begin{equation*}
    \VS_t:=\bc{h\in\HC \mid \forall i\in I_t: h(x_i)=y_i}=\HC|_{\bc{(x_i,y_i)_{i\in I_t}}}.
\end{equation*}
\end{defn}

By construction it holds that $\VS_1 \supseteq \VS_2\supseteq \dots \supseteq \VS_T\ni f^*$. Informally, the set \(I_t\) is the set of indices of labels that could not have been replayed. We remark that if the replay adversary only reveals labels from replay hypotheses, the learner may not accrue any true-label mistakes $\cM_t(\cA)=0$ while the reliable version space remains the full hypothesis class \(\VS_t = \cH\). In this case,  the learner does not learn anything about \(f^\ast\) while remaining mistake-free. Next, we provide results for online learning in the replay setting for a canonical but important hypothesis class: one-dimensional thresholds. In later sections, similar to other works~(e.g.~\citep{alon2019private}), we use these results to generalise to more complex hypothesis classes.

\subsection{Learning Thresholds}
\label{sec:threshold}
Let the domain be $\X = [N]$ and define the threshold class $\HC_{\text{thresh}}=\{f_k:\X\rightarrow\{0,1\}\mid k\in[N] \cup \bc{0}\}$ with \(f_0 \equiv 0\) and \(f_k(x) = \bI\bc{x \geq k}\) for \(k \in [N]\).
We describe a simple algorithm that learns $\HC_{\text{thresh}}$ in the replay setting.
\begin{defn}[Conservative Threshold Strategy]\label{alg:conservative_thresh}
Let $T\in\N$ be the time horizon.
The learner initialises with $\hat h_1\equiv 0$ and subsequently only updates on a false negative: at time \(t\in[T]\), if \(y_t = 1\) and \(\hat{h}_t(x_t) = 0\), it sets \(\hat{h}_{t+1} = \bI\bc{ \cdot \geq x_t}\in\HC_{\text{thresh}}\): otherwise, it keeps \(\hat{h}_{t+1} = \hat{h}_t\).
\end{defn}

We refer to this algorithm as a \emph{conservative algorithm}, since it only updates upon making a mistake and the new output hypothesis is the minimal one in the class $\HC_{\text{thresh}}$ that rectifies the mistake. We give results on upper and lower bounds on $\mathcal{M}_T(\mathcal{A})$, showing that the conservative algorithm is optimal against both adaptive and stochastic adversary,
up to constant factors.

\begin{restatable}[Adaptive Adversary on \(\cH_{\text{thresh}}\)]{thm}{advthresh}\label{thm:thresh-adap} 
Let $\cA$ be the learner in~\Cref{alg:conservative_thresh}. Against any adaptive adversary, \(\cA\) incurs at most $\cM_T(\cA)=\bigO{\min\{N,T\}}$  mistakes.
Conversely, for all learners $\cA'$, there exists an adaptive adversary that forces  $\cM_T(\cA')=\Om{\min\bc{N,T}}$ mistakes.
\end{restatable}

\begin{proof}[Proof sketch](Full proof in~\Cref{app:adap_threshold})
    For the upper bound, observe that the learner $\mathcal{A}$ only errs on positive samples $y_t = 1$ when it instead predicts $\hat{y}_t = 0$, and the learner can make at most $N$ such mistakes. For the lower bound, consider first the learners that start with \(\hat h_1 \equiv 0\).

    \noindent
    If for some $t \in [T/2]$ there exists \(x^* \in \cX\) such that \[\exists i\neq j<t: \{\hat h_i(x^*),\hat h_j(x^*)\}=\{f(x^*),f'(x^*)\}=\{0,1\}\] with $f,f'\in\VS_t$ and $\hat h_i, \hat h_j$ reused hypotheses,
    then in the following rounds it is impossible for the learner to learn the correct label of \(x^\ast\). In that case, an adaptive adversary can force the learner to make linear in $T$ number of mistakes, by querying $x^*$ at every round.\\ 
    Otherwise, an adaptive adversary can force $N$ mistake by querying: $((N, 1), \dots, (1, 1))$.
    
    \noindent
    For a learner that starts with  some other hypothesis $\hat h_1 \in \bc{0, 1}^{\cX}$, let \(P = \bc{x\in\X\mid \hat h_1(x)=1}\) and $k':=|P|$. 
    In that case, the replay adversary only gives samples with label 1 from \(\cX \setminus P\). Repeating this argument with an adversary giving samples $x\in P$ with label $1-\hat h_1(x)$, shows that the learner makes $\min\bc{\max\bc{N-k',k'},T} = \Omega(\min \bc{N, T})$ mistakes.
\end{proof}

\begin{restatable}[Stochastic Adversary on $\cH_{\text{thresh}}$]{thm}{stochthresh}\label{thm:thresh-stoch}
Against any stochastic adversary, the expected number of mistakes by the algorithm in~\Cref{alg:conservative_thresh} is $\bE\bs{\cM_T(\cA)}=\bigO{\min\{N,\log T\}}$.
Conversely, for all \(\cA'\), there exists a stochastic adversary that forces $\bE\bs{\cM_T(\cA')}=\Om{\min\{N,\log T\}}$ mistakes.
\end{restatable}
\noindent The proof is given in~\Cref{app:stoch_thresholds}.

\section{Algorithm for General Classes}
\label{sec:general}

In this section, we present a general algorithm to learn hypothesis classes in the replay setting. We begin by introducing some definitions.
\subsection{Intersection Closedness and Extended Threshold dimension}
\label{sec:int-close}
\begin{defn}[$\mathcal{H}$-closure]\label{defn:closure}
Let $\cH\subseteq 2^{\cX}\}$ be a hypothesis class. Define the \emph{closure under $\HC$}, denoted by \(\clos_\mathcal{H}\):
$\clos_\mathcal{H}(Y):=\bigcap_{h\in\mathcal{H}:Y\subseteq h}h$ for any $Y \subseteq \cX.$ 
We also define the class of all intersections of hypotheses in $\HC$ as $\HCC := \{ \bigcap_{h \in S} h \mid \emptyset \neq S \subseteq \HC \}$ and note that \(\clos_{\HC} = \clos_{\HCC}\).
\end{defn}

\begin{defn}[Intersection-closed]\label{defn:intersection-closed}
 A hypothesis class $\HC$ is called \emph{intersection-closed over arbitrary intersections}\footnote{The standard definition of (finite) intersection-closed classes is as follows: for all $h_1,h_2\in\HC$: $h_1\cap h_2\in\HC$. Note that for finite $\X$ the two notions of finite and arbitrary intersection-closed coincide. However, the replay setting necessitates $\clos_{\HC}(Y)\in\HC$ for all $Y\subseteq\X$ and therefore we impose the stronger notion of (arbitrary) intersection-closeness.} if for all  $S\subseteq\HC$, one has $\bigcap_{h \in S} h\in\HC$. In the following, we refer to such a class $\HC$ just as intersection-closed.
\end{defn}
Note that if $\HC$ is intersection-closed, then for all $Y\subseteq \X$, it holds that $\clos_\HC(Y)\in\HC$ and furthermore $\HC=\HCC$.
We denote $h_{\min}:=\clos_{\HC}(\emptyset)= \bigcap_{h \in \HC}h$ and for $\HC$ intersection closed, one has $h_{\min}\in\HC$. \\

\begin{defn}[Threshold dimension,~\cite{shelah1978ClassificationTheoryNumbers}]\label{defn:threshdim}
    Let \(\cH \subseteq 2^{\cX}\) be a hypothesis class. We define \emph{Threshold dimension} of \(\cH\), denoted by \(\tdim{\cH}\), as the largest \(k\), such that there exist \(x_1, \ldots, x_k \in \cX\) and \(h_0, h_1, \ldots, h_k \in \cH\) with \(h_i(x_j) = \bI\bc{j \leq i}\). The set of points \(x_1, \ldots, x_k\) and set of hypotheses \(h_0, h_1, \ldots, h_k\) are called \emph{the witness sets} for \(\cH\). 
\end{defn}
The Threshold dimension originated in set theory and is often used to generalise results from threshold functions to general hypothesis classes \cite{alon2019private}. As shown in \cite{shelah1978ClassificationTheoryNumbers,hodges1997ShorterModelTheory}, the Threshold dimension can be related exponentially to the Littlestone dimension $\Ldim(\HC)$, which characterises online learning in the mistake bound model: $\tdim{\HC} = \Omega(\log(\Ldim(\HC))) \text{ and } \Ldim(\HC) = \Omega(\log(\tdim{\HC}))$. Next, we introduce a new complexity measure, which characterises learnability in the replay setting.

\begin{defn}[Extended Threshold dimension]\label{defn:exthdim}
    Let \(\cH \subseteq 2^{\cX}\) be a hypothesis class and let $\HC^f$ denote its $f$-representation. We define the \emph{Extended Threshold dimension} of \(\cH\):
    \begin{equation*}
        \extdim{\cH} \coloneqq \min_{f \subseteq \cX} \tdim{\overline{\cH^f}}.
    \end{equation*}
\end{defn}
\begin{restatable}{prop}{exthreshbounds}\label{prop:exthresh_bounds}
Let \(\cH \subseteq 2^{\cX}\) be a hypothesis class. Then, the following holds:
\begin{equation*}
    \tdim{\cH} / 2\leq  \extdim{\cH} \leq \tdim{\HCC} \leq \abs{\HC}.
\end{equation*}
In particular, if $\HC$ is intersection-closed, then \(\extdim{\cH} \leq \tdim{\cH}\). Furthermore, for any \(N \geq 2\), there exists a hypothesis class \(\cH^\ast\) over the domain \(\cX = [2N]\), such that 
$\tdim{\cH^\ast} = 3$, but $\extdim{\cH^\ast} \geq N$.
\end{restatable}
Additional results on when \(\extdim{\HC}\) gives a substantial improvement over considering $\tdim{\HCC}$ can be found in \Cref{sec:ext-thresh-app}.

\begin{defn}[\Gls{trap}]\label{defn:trapt}
    For \(t \in [T]\), define a \emph{\gls{trap}} at time $t$ as follows:
    \begin{equation*}
        \Trp_t:=\Bigl\{x\in\X\mid \exists f, f'\in\VS_t \text{ and }  h, h' \in \HC_{t-1} \text{ s.t. } \bc{f(x), f'(x)} = \bc{h(x), h'(x)} = \bc{0, 1}\Bigr\}. 
    \end{equation*}
\end{defn}
Informally, $x\in\Trp_t$ means that, by round $t$, the learner has already predicted both labels on $x$, and $\VS_t$ still contains hypotheses that realise each of those labels. At such a point the adversary can replay either label indefinitely, and the learner has no way to tell replay from truth. 
We show that any learner that admits a non-empty \gls{trap} will incur linear error in $T$ in both the adaptive and stochastic adversarial setting. We refer the reader to \Cref{sec:trap-region-app} for formal details, and state an immediate corollary of these results for now.

\begin{corollary}\label{cor:sublinear-notrap}
    Any learner $\mathcal{A}$ with sub-linear error in $T$, i.e., $\mathcal{M}_T(\mathcal{A})=o(T)$ or $\bE\bs{\cM_T(\cA)}=o(T)$, must satisfy $\Trp_t = \emptyset$ for all $t \in [T/2]$.
\end{corollary}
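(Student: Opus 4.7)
The plan is to derive the corollary by contrapositive: I would argue that if $\Trp_{t^\star} \neq \emptyset$ for some $t^\star \leq T/2$, then the adversary can force $\cM_T(\cA) = \Om{T}$ in the adaptive setting or $\bE\bs{\cM_T(\cA)} = \Om{T}$ in the stochastic setting. These two lower bounds are precisely what is formalised in Appendix~\ref{sec:trap-region-app}, so the corollary follows by directly negating the sublinear-error hypothesis in each case.

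For the adaptive case, the argument rests on the observation that any $x^\star \in \Trp_{t^\star}$ is effectively ``free'' for the adversary from round $t^\star$ onward. By the definition of $\Trp_{t^\star}$ there exist $f, f' \in \VS_{t^\star}$ with $f(x^\star) \neq f'(x^\star)$ and $h, h' \in \HC_{t^\star - 1}$ with $h(x^\star) \neq h'(x^\star)$. The exploiting adversary queries $x_t = x^\star$ for every $t > t^\star$; whatever label $\hat y_t$ the learner plays, the adversary reveals $y_t = 1 - \hat y_t$. This is a legitimate replay because one of $h, h'$ attains the value $1 - \hat y_t$ at $x^\star$, so these rounds never enter the index set $I_T$ that defines $\VS_T$. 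Consequently both $f, f'$ remain in $\VS_T$, and I would commit at the end of the game to whichever of $f, f'$ satisfies that $f^\ast(x^\star)$ equals the majority label played after round $t^\star$. By pigeonhole this gives at least $(T - t^\star)/2 \geq T/4$ true-label mistakes, i.e.\ $\cM_T(\cA) = \Om{T}$.

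The stochastic case is the main obstacle, since the adversary must commit to its distribution $\cD$ in advance and cannot simply redirect future queries to the emerging trap point. I would invoke the companion lemma in Appendix~\ref{sec:trap-region-app}: whenever a stochastic adversary produces $\Trp_{t^\star} \neq \emptyset$ with non-negligible probability for some $t^\star \leq T/2$, one inflates the expected mistake count to $\Om{T}$ either by placing enough mass on candidate trap points from the start or via a mixture construction over such distributions, so that the ``trap-exploitation'' trajectory described above occurs with constant probability. Taking contrapositives in both settings then yields the stated corollary.
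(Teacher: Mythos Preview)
Your adaptive argument is correct and essentially reproduces the paper's own proof of Lemma~\ref{thm:no_trap}; the corollary is then the bare contrapositive, exactly as the paper intends (the paper gives no separate proof, just declares it an ``immediate corollary'').

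Your stochastic paragraph, however, goes beyond what the paper does and is not quite right as written. The paper's Lemma~\ref{thm:no_trap} for the stochastic adversary only yields $\Omega\bigl(T\mu(x^\star)\bigr)$ expected mistakes, where $\mu(x^\star)$ is the mass that the \emph{already fixed} distribution $\cD$ assigns to some $x^\star\in\Trp_{t^\star}$. The paper does not redesign $\cD$, invoke any mixture construction, or argue that the trap occurs with ``constant probability''; it simply takes the contrapositive of that weaker statement. Your proposed fix---placing mass on candidate trap points or mixing over distributions---would require the adversary to anticipate, before round~1, where the learner will create a trap; you have not supplied that argument, and for randomised learners it need not be possible at all. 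In practice the paper never uses the stochastic clause of the corollary as a black box: the stochastic lower bounds (\Cref{thm:thresh-stoch,thm:general-lower-stoch}) construct $\cD$ explicitly so that $\mu(x^\star)$ is large by design, and then invoke Lemma~\ref{thm:no_trap} directly. So the honest proof of the corollary is just ``contrapositive of Lemma~\ref{thm:no_trap}'', with the understanding that the stochastic conclusion is only as strong as the $T\mu(x^\star)$ bound.
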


\subsection{Closure Algorithm}

We define the general algorithm  called the \emph{closure algorithm} in \Cref{alg:closure_explicit}. This algorithm is used to attain the upper bounds in this work.

\begin{algorithm}[t]
\caption{Closure Algorithm}
\label{alg:closure_explicit}
\begin{algorithmic}[1]
\State \textbf{Input:} Hypothesis class $\HC \subseteq 2^\X$, time horizon $T \in \N^+$, representation \(f \subseteq \cX\)
\State Initialise $\hat{h}_1 \gets h_{\min}:=\bigcap_{h \in {\HC^f}} h\in \overline{\cH^f}$

\For{$t = 1$ \textbf{to} $T$}
    \State Learner outputs $\hat{h}_t^f$ 
    \label{step:flip-hypo}
    \State Learner receives \((x_t, y_t) \in \cX \times \bc{0, 1}\)
    \If {\(x_t \in f\)} set \(y_t \gets 1 - y_t\)\label{step:flip-label}
    \EndIf
    \If{$y_t = 1$ and $\hat{h}_t(x_t)=0$}
    \State $\hat{h}_{t+1} \gets  \clos_{\HC^f}(\hat{h}_t \cup \{x_t\})\in\overline{\cH^f}$ \label{step:hyp-close}\Comment{Smallest $S\in\overline{\cH^f}: \hat{h}_t \cup \{x_t\}\subseteq S$}   
    \Else{ $\hat{h}_{t+1} \gets \hat{h}_t$}
    \EndIf
\EndFor
\end{algorithmic}
\end{algorithm}
Our procedure takes as input a function $f$ to realise an $f$-representation of $\HC$. When $f$ can be chosen so that $\HC^f$ is intersection-closed, the closure algorithm enjoys a substantially improved mistake bound. Deciding (or constructing) such an $f$ is nontrivial in the general case: to the best of our knowledge,  we do not know how to test for existence or compute. For classes of VC dimension~1, however, we show that a suitable $f$ always exists and can be found efficiently (\Cref{app:f_for_vcd1}). Extending this guarantee to general classes remains open.

\begin{remark}\label{rem:f-rep}

     Since the $f$-representation of a class $\mathcal{H}$ is a bijection with $(\HC^f)^f=\HC$, we may choose  $f = \arg\min_{f'\subseteq \X} \tdim{\overline{\mathcal{H}^{f'}}}$ and then learn the transformed class \(\mathcal H'=\mathcal H^{f}\). After this choice, we can treat \(f\equiv 0\) in the algorithm so the learner never has to flip predictions or observed labels (lines~\ref{step:flip-hypo} and \ref{step:flip-label} of~\Cref{alg:closure_explicit}).
\end{remark}

The learner $\cA$ following the closure algorithm maintains a current prediction $h_t \in \HCC$. This hypothesis is updated conservatively using the $\clos_{\HC}$ operator to ensure consistency with observed true labels while remaining in $\HCC$ at all times (this can be seen by a simple induction argument and the fact that $\HCC$ is intersection-closed). We initialise the algorithm with the minimal element in closure $h_{\min}\in\HCC$. Note that when $\HC$ is the class of thresholds over finite $\X$ we recover the conservative threshold strategy as described in~\Cref{alg:conservative_thresh}.
Depending on whether $\HC$ is intersection-closed,~\Cref{alg:closure_explicit} is a \emph{proper} learner as $\HC=\HCC$ for intersection-closed $\HC$ or an \emph{improper} learner if $\HC$ is not intersection-closed, outputting $\hat h_t\in \HCC\supsetneq\HC$.

\label{sec:adaptive_ub}
\begin{restatable}[Adaptive Adversary]{thm}{advgeneral}\label{thm:general-adver}
    Let $\HC\subseteq2^\X$ be a hypothesis class and let \(\cA\) be the learner following~\Cref{alg:closure_explicit}. 
    Then, \[\cM_T(\cA) \leq \extdim{\cH}.\] 
    Furthermore, for any learner \(\cA'\) of \(\cH\) and time horizon \(T \geq \extdim{\HC}\), there exists an adversary such that \[\cM_T(\cA')=\Om{\extdim{\cH}}.\]
\end{restatable}

\begin{proof}[Proof sketch]
(Full proof in \Cref{sec:clos-alg-app}) 
The proof of the upper bound is similar to the threshold case in~\Cref{thm:thresh-adap}, and follows from the properties of~\Cref{alg:closure_explicit}.
We note that for a given \(f \subseteq \cX\), the closure algorithm \(\cA\) can be implemented efficiently, achieving \(\cM_T(\cA) \leq \tdim{\overline{\cH^f}}\). 
Achieving the bound \(\cM_T(\cA) \leq \extdim{\cH}\) requires minimising over \(f \subseteq \cX\), which may be exponential-time.

For the lower bound, we also follow the argument in~\Cref{thm:thresh-adap}, with several generalisations. 
A straightforward way is to consider the witness sets of \(\cH^f\) (see~\Cref{defn:threshdim}), where \(f\) is the first hypothesis output by \(\cA'\). This proves that the adversary can impose at least \(\tdim{\cH^f}\) mistakes on such \(\cA'\).
We go one step further and construct an adversary that forces \(\cA'\) to make \(\tdim{\overline{\cH^f}}\) mistakes, which in general can be much larger (see~\Cref{prop:exthresh_bounds}). 
We show that the learners that do not follow the same strategy as the closure algorithm must have \(\Trp_t \neq \emptyset\) for some \(t \leq T/2\), which leads to \(\Omega(T)\) mistakes due to~\Cref{cor:sublinear-notrap}. 
\end{proof}

\subsection{Stochastic Adversary}
\label{sec:stochastic}

Next, we focus on learning under  \emph{stochastic adversary} for the replay setting. We show upper bounds for~\Cref{alg:closure_explicit} and contrast this with lower bounds that hold for any learner. 

\begin{restatable}[Stochastic Adversary]{thm}{stochgeneral}
\label{thm:general-stoch}
Let $\HC\subseteq2^{\X}$ be a hypothesis class with VC dimension
$\vcdim$ and $T\in\N^+$ be the time horizon.
When \(\cH\) is intersection-closed, there exists a learner $\cA$~(\Cref{alg:closure_explicit}) whose expected number of mistakes is \[\bE\bs{\cM_T(\cA)} = O\bigl(\min\bc{\tdim{\HC},\vcdim\log T}\bigr).\]
Furthermore, for any \(\cH\), there exists a distribution $\cD$ over $\X$ and $f^*\in\HC$ such that any deterministic learner $\cA'$ has expected error \[\bE\bs{\cM_T(\cA')}=\Om{\min\bc{\extdim{\HC},\log T}}.\]
\end{restatable}

The upper bound follows from an application of optimal PAC-learning results for closure algorithms~\citep{darnstadt2015OptimalPACBound}, which bound the expected error of a classifier trained on i.i.d.\ samples labelled by a fixed $f^* \in \HC$. For the lower bound, we reduce to the one-dimensional threshold case (as in \Cref{thm:thresh-stoch}); and use the Extended Threshold dimension to construct a worst case distribution supported on the the witness set. The proofs are provided in~\Cref{app:stoch-general}.

Although we state the upper bound to hold only for intersection-closed classes, a similar bound holds for general classes by the reasoning in \Cref{thm:general-adver}. The cost is that the VC term blows up to  $\vcd{\overline{\HC^f}}$ in place of $\vcd{\HC}$. This increase can be substantially large, leading to worse mistake bounds~(see~\Cref{app:f_for_vcd1}).
Note that for intersection-closed classes \(\extdim{\cH} = \Theta(\tdim{\cH})\)~(see~\Cref{prop:exthresh_bounds}), thus the above upper and lower bound for intersection-closed classes only have a multiplicative gap of $
\vcdim$. 
For general classes, we believe the current $\log T$ lower bound is not tight. We conjecture that an additional multiplicative factor depending on the width (the size of the largest antichain) of the class is necessary, complementing $\extdim{\HC}$ which can be seen as the depth of the class (the length of the longest chain). \\

We further generalise the result to certain classes with infinite VC and threshold dimension. We consider the class of all convex subsets of \(\reals^d\), which has infinite VC dimension but is intersection-closed (since the intersection of any collection of convex sets is convex). The closure algorithm in \Cref{alg:closure_explicit} is equivalent to simply computing the convex hull of the observed positive examples, which is the smallest convex set consistent with the data. This highlights the convex hull as a special case of a closure operator.

\begin{restatable}[Convex Bodies—Stochastic Adversary]{thm}{stochconvex}
\label{thm:convex-stoch}
Let \(d \in \mathbb{N}\), and define \(\HC_d = \{ C \subseteq \mathbb{R}^d : C \text{ is convex} \}\) as the class of all convex subsets of \(\mathbb{R}^d\).  
Let \(\mu\) be a probability distribution supported on a fixed convex set \(C^* \subseteq \mathcal{B}_1^d\), where \(\mathcal{B}_1^d\) denotes the unit ball in \(\mathbb{R}^d\).
Assume that \(\mu\) admits a density \(f\) which is upper bounded by some constant \(M<\infty\).  Then
there exists a learner $\cA$ (\Cref{alg:closure_explicit}), such that for any time horizon \(T\ge 1\) the expected number of mistakes satisfies 
\[\bE\bs{\cM_T\br{\cA}}=\bigO{\log T}\text{ if } d=1 
\quad \text{and} \quad \bE\bs{\cM_T\br{\cA}}=\bigO{T^{\frac{d-1}{d+1}}} \text{ if }d\geq 2.\]
Furthermore, there exists a distribution $\cD$ and target $f^*\in\HC$ such that any learner \(\cA'\) in the replay setting incurs \(\Omega(\log T)\) expected mistakes when \(d = 1\), and \(\Omega\br{T^{\frac{d-1}{d+1}}}\) when \(d \ge 2\), matching the upper bounds for the closure algorithm.
\end{restatable}

\noindent The proof is given in~\Cref{app:stoch-general}.

\subsection{Note on Proper vs. Improper Learners}
In this section, we show that if a class is not intersection-closed (up to an \(f\)-representation), then it is not properly learnable, whereas \Cref{alg:closure_explicit} still provides an improper learner by predicting in the closure $\HCC$. Another approach to \emph{properly} learn a non-intersection-closed class \( \HC \) is to find an alternative \(f\)-representation in which the class is intersection-closed. For instance, if $\HC$ is union‑closed, choosing $f\equiv 1$ produces an intersection‑closed class $\HC^{f}$. When $\HC^{f}$ is intersection closed, \Cref{prop:exthresh_bounds} yields  \( \tfrac12\,\tdim{\HC^{f}}\le \extdim{\HC}\le\tdim{\HC^{f}}, \) so running \Cref{alg:closure_explicit} on any such representation is optimal up to a factor of $2$.

\begin{restatable}[Intersection-closedness is sufficient and necessary for proper learnability]{thm}{intersectionadv}
\label{thm:general-lower-adv-int-clos}
Let \(\cH\subseteq 2^{\cX}\) be a hypothesis class. Then, $\HC$ is properly learnable in the replay setting, if and only if there exists an $f$-representation of the class \(\HC\) such that $\HC^f := \{h^f \mid h\in\HC\}$ is intersection-closed.
\end{restatable}

The proof is given in~\Cref{app:proper-int-closed}. Using this result, we get that if $\HC$ is not intersection-closed under any \(f\)-representation, a proper learner in the replay setting makes unbounded mistakes. This contrasts with classical online learning, where the proper mistake bound is upper and lower bounded~\citep{hanneke2021online} by \(\Theta\br{\ldim{\HC}\mathrm{Helly}\br{\HC}\log\br{\mathrm{Helly}\br{\HC}}}\)\footnote{Introduced in ~\citep{helly1923UberMengenKonvexer}, see e.g. \cite{bousquet2020proper} for a definition of Helly number} and is finite for every finite class~\citep{bousquet2020proper}.  Despite the negative result on proper learnability,~\Cref{thm:general-adver} shows that the improper learner in \Cref{alg:closure_explicit} incurs at most \(\extdim{\HC}\) (which is finite for finite hypothesis classes) mistakes even on non-intersection closed hypothesis class at the expense of being improper.

\begin{example}\label{ex:non-int}
    Consider $\HC=\bc{\mathbb{I}\bc{[a,b]}\cup\mathbb{I}\bc{[c,d]}\mid 0\leq a<b\leq c<d\leq 1}$ the hypothesis class of union of two intervals. An adaptive adversary can force any proper learner starting at $h_{\min}\equiv 0$ to make true label mistakes growing linearly in $T$: after having observed 3 samples labelled with a (true) 1, a \emph{proper} learner must output a hypothesis which predicts on a region it is uncertain of. No matter which choice the learner makes, it holds $\Trp_t\neq\emptyset$ and by \Cref{thm:no_trap} the adversary can then force linear mistakes. Note that the \emph{improper} learner given in \Cref{alg:closure_explicit} outputs a hypothesis from $\HCC$ consisting of singletons and thus incur at most $\abs{\cX}+1$ mistakes.
\end{example}

The above example shows a class that is not \emph{properly} learnable in the replay model but is \emph{improperly} learnable with at most \(\abs{\cX}+1\) mistakes. 
In contrast, it is easy to check that improper learnability in the standard online learning using the halving algorithm incurs only~\(\bigO{\log \abs{\cX}}\) mistakes.~\Cref{prop:exthresh_bounds} gives further classes where the
Littlestone dimension, and hence the classical improper bound, is constant, while the Extended Threshold dimension, and thus the replay
mistake bound, grows with~$\abs{\X}$.

\section{Discussion and Open Problems}
\label{sec:discussion}

In this paper we formalise a new learning setting in which the learner must ensure it is not misled by its own earlier outputs. This captures modern reality, where a growing share of training data is generated by past models. By distilling this phenomenon into a clean learning-theoretic framework, we provide a foundation for analysing the error feedback loops that are increasingly shaping machine learning pipelines. We showed that replay makes learning qualitatively harder than the classical mistake bound model. For example, thresholds on an \(N\) point domain admit a \(\bigO{\log N}\) proper learner in the classical setting but any proper learner in the replay setting must incur \(\Omega\br{N}\) mistakes. More generally, for any intersection-closed class \(\HC\), the proper replay mistake bound is \(\Omega\br{\tdim{\HC}}\), whereas the classical mistake bound for proper online learning is \(\bigO{\ldim{\HC}\vcdim\log\br{\vcdim}}\)~\citep{bousquet2020proper}.\\

\noindent\textbf{Open questions}
First, we know that every finite class 
admits finite~\(\extdim{\HC}\) and can thus be learned improperly in the replay setting with finitely many mistakes. Yet, we lack a precise characterisation of \(\extdim{\HC}\) for infinite hypothesis classes as well as tighter bounds in terms of other complexity measures beyond intersection-closed classes. Additionally, the question of tight lower bounds for the improper stochastic case also remains open. Second, our results show that if a non-intersection-closed class admits an $f$-representation~\citep{ben-david20152NotesClasses} that makes it intersection-closed, we can properly 
learn the class in the replay setting. We provide a characterisation for classes with VC dimension 1 in \Cref{app:f_for_vcd1}. However, to the best of our knowledge, it remains open when such an $f$-representation exists for general classes. Reconciling these questions along with settling the gap between the upper and lower bounds in~\Cref{thm:general-stoch} remain interesting directions for future work.\\

\noindent\textbf{Extensions of the Replay model} Real models rarely behave like our worst-case adversary in the replay setting. First, modern generative systems are stochastic: for example, they estimate a probability \(h(x)\) and then sample a label \(y \sim \mathrm{Bern}(h(x))\). In a variant to our problem, we could allow the learner do the same. The replay adversary is only allowed to sample from \(\mathrm{Bern}(h(x))\) and cannot choose with full precision which label to show. Second, in practice, auto-labelling mechanisms may require consensus among multiple labellers. A natural extension therefore is to consider the case when a replay label becomes eligible only after it has appeared in at least \(k\) prior hypotheses. Finally, real datasets typically contain a small audited fraction of ground-truth labels e.g. by assuming that the amount of times the adversary shows $f^*$ is fixed and known to the learner. We hope that resolving these questions and relaxing the problem setting to more closely match practical concerns will bring more theoretical insight while also delivering concrete value to modern machine learning systems.

\section*{Acknowledgement}
    AS acknowledges the Novo Nordisk Foundation for support via the Startup grant (NNF24OC0087820) and VILLUM FONDEN via the Young Investigator program (72069). The work was completed while DD was partly supported by the ETH
AI Center and the ETH Foundations of Data Science initiative. The authors would also like to thank Amir Yehudayoff for very insightful discussions.
\newpage
\bibliographystyle{alpha}
\bibliography{refs}

\newpage
\appendix

\section{Appendix: Thresholds}

\subsection{Adaptive Adversary}
\label{app:adap_threshold}

\advthresh*
\begin{proof}
We present the argument for the upper bound, and refer to the general case in~\Cref{thm:general-adver} for the lower bound.

The conservative strategy learner only makes a mistake when $\hat{y}_t=0$ but the label is $y_t=1$: The learner starts with $\hat h_1\equiv 0$ and until the first sample $x_{t_1}$ with $y_{t_1}=1$ the learner makes no mistakes and does not update its prediction. This label must come from the true hypothesis $y_{t_1}=f^*(x_{t_1})=1$ as all $\hat h_i(x_{t_1})=0$ for $i<{t_1}$. The learner updates the prediction to \(\bI\bc{\cdot \geq x_{t_1}}\). Now the learner still predicts 0 for all \(x < x_{t_1}\), and thus never makes a mistake when $y_t=0$.\\
This shows that every mistake corresponds exactly to observing a 1 at some \(x_t\) strictly below the current threshold, and as before, it holds that $y_t=f^*(x_t)=1$. Therefore, every mistake is a false negative and triggers an update. There are only $N$ possible updates and $T$ rounds, which gives $\mathcal{M}_T(\cA)\leq\min\{N,\,T\}$.
\end{proof}

\subsection{Stochastic Adversary}
\label{app:stoch_thresholds}

\stochthresh*
\begin{proof}
\subsubsection*{Upper bound}
Let $x_t\overset{\iid}{\sim}\mathcal{D}$ for $t\in[T]$ and consider $f^*=f_1 \equiv 1$ with an adversary that reveals the true label 1 every time. Indeed, by~\Cref{lem:clos_only_true}, this is a worst-case adversary for the closure algorithm.
Define $S_t=(x_1, x_2,\dots,x_t)$ as the sequence of the first \(t\) positively labelled sampled points and $Y_{t}:=\bI\bc{\,x_t < \min \{S_{t-1}\}}$. Note that the learner as described in \Cref{alg:conservative_thresh} incurs a mistake at $x_t$ exactly when $Y_t=1$.
Therefore, we get an upper bound of $\mathcal{M}_T(\cA)=\sum_{t=1}^TY_t\leq \min\{N,T\}$.\\
For $T<N$, we can improve this using an exchangeablilty argument to $\bE\bs{\mathcal{M}_T(\cA)}=\bE\bs{\sum_{t=1}^TY_t}\leq\sum_{t=1}^T\frac{1}{t}\leq\log T+\gamma$ where $\gamma$ is a constant term. This holds because $x_t$ are sampled i.i.d.\ and therefore 
\begin{equation*}
    P(x_1<\min S_t\setminus\{x_1\})=\dots=P(x_t<\min S_t\setminus\{x_t\})\leq \frac{1}{t}.
\end{equation*}
This proves an upper bound of $O\bigl(\min\{N,\log T\}\bigr)$.
\subsubsection*{Lower bound}
We start with the case \(\log T > 2N\).
Consider the following distribution \(\cD\) over \([N]\): for $k\in[N-1]$ set \(\Pr_{X \sim \cD} \br{X = k} = \frac{1}{2} \cdot \frac{1}{3^{N - k}}\) and let $P(X=N)=1-\sum_{k=1}^{N-1}P(X=1)\geq \frac{3}{4}$. Observe that for any \(k \in [N-1]\)
\begin{equation}
\label{eq:prob_ratio}
    \frac{\Pr\br{X = k}}{\Pr\br{X \leq k}} = \frac{3^{k - N}}{\sum_{i = 1}^k 3^{i - N}} = \frac{2\cdot 3^k}{3 \cdot (3^k - 1)} \geq \frac{2}{3}.
\end{equation}

Let \(x_1, \ldots, x_T \overset{\iid}{\sim} \cD\) be the sampled input points. 
Let \(t_k \coloneqq \min\bc{t \in [T/2] \text{ such that } x_t \leq k}\), with \(t_k = \infty\) when \(x_t > k\) for all \(t \in [T/2]\). By construction it holds that $1=t_N\leq t_{N-1}\leq \dots \leq t_2\leq t_1\leq \infty$.
Note that for all \(k \in [N]\),
\begin{equation}
\begin{aligned}
\label{eq:tk_finite}
    \Pr(t_k < \infty) &\geq \Pr(t_1 < \infty) = 1 - \Pr(\text{for all } t \in [T/2],\, x_t > 1)\\
    &= 1 - \br{1 - \frac{3}{2 \cdot 3^N}}^{T/2} \geq 1 - \exp\br{-\frac{3 T}{4 \cdot 3^N}} \geq 1 - \exp(-3/4) \geq 1/2.
\end{aligned}
\end{equation}
We define \(A_k \coloneqq \bc{t_k < \infty \text{ and }x_{t_k} = k}\) and compute using~\cref{eq:prob_ratio} for $k\in[N-1]$
\begin{equation}
\label{eq:pr_ak}
    \Pr(A_k) = \Pr(x_{t_k} = k \mid t_k < \infty) \Pr(t_k < \infty) \geq \frac{1}{2}\Pr(X = k \mid X \leq k) \geq \frac{1}{3}.
\end{equation}
Note that for $k=N$ we have $t_N=1$ and therefore $\Pr(A_N)=\Pr(x_1=N)\geq1/2$ by construction of the the distribution $\mathcal{D}$. \\
Our constructed adversary chooses label \(1\) for every point \(x_t\), for \(t \in [T/2]\). Let \(\hat h_1, \ldots, \hat h_{T/2}\) be the hypotheses output by the learner.
If for all \(t \in [T/2]\), we have \(\Trp_t = \emptyset\), then if \(f^\ast \equiv 1\), the learner made a mistake each time it received \(x_t < \min\bc{x_1, \ldots, x_{t-1}}\).
Equivalently, an error happens on some $k\in[N]$ if and only if the first time such a $x_t \leq k$ is sampled, it is exactly $k$, which is the event $A_k$. The remaining \(T/2\) samples from \(\cD\) can be labelled arbitrarily. Therefore 
\begin{equation}
    \bE \bs{\cM_T(\cA)}\geq \bE\bs{\sum_{t=1}^{T/2}\mathbb{I}\bc{x_t<\min\{x_1,\dots,x_{t-1}\}}}=\bE\bs{\sum_{k=1}^N \mathbb{I}\bc{A_k}}=\sum_{k=1}^N \Pr(A_k) \geq \frac{N}{3}.
\end{equation}

If for some \(t^\ast \in [T/2]\), we have \(\Trp_{t^\ast} \neq \emptyset\),~\Cref{thm:no_trap} implies that there exists an adversary forcing the learner to make linear in \(T \Pr_{X \sim \cD}\br{X = 1}\) mistakes. Since \(\log T > 2N\), we get that \(T \Pr_{X \sim \cD}\br{X = 1} \geq T 3^{-N} \geq N\).

In the remaining case we have \(\log T \leq 2N\). Here,~\cref{eq:tk_finite} does not necessarily hold for all \(k \in [N]\), but a straightforward computation shows that for \(k \geq N - \frac{\log T}{2} + O(1)\), we still have that \(\Pr\br{t_k < \infty} \geq 1/2\). Thus, \(\Pr(A_k) \geq 1/3\) for \(k \geq N - \frac{\log T}{2} + O(1)\) implying the claimed \(\Omega(\log T)\) lower bound.

\end{proof}

\section{Appendix: General Setting}

\subsection{Extended Threshold Dimension}

Before we introduce the Extended Threshold dimension, we give some intuition for the Threshold dimension. Namely, we show that for the intersection-closed classes $\HC$, the longest chain in $\HC$ is of length $\tdim{\HC}$. 
\label{sec:ext-thresh-app}
\begin{defn}
    We define the depth of a hypothesis class $\HC$ as the longest chain in $\HC$: $\depth{\HC}=\max\bc{L\geq 0 : \exists h_0\supsetneq h_1\supsetneq\dots\supsetneq h_L\in\HC}.$ 
\end{defn}

\begin{restatable}{lem}{threshdepth}
\label{lem:tdimeqdepth}
Let $\HC\subseteq2^{\X}$ be intersection–closed. Then, $\depth{\HC}=\tdim{\HC}$. 
\end{restatable}
\begin{proof}[Proof of~\Cref{lem:tdimeqdepth}]
(\(\depthop\Rightarrow\tdimop\))  
Take a longest strict chain $h_0\supsetneq\dots\supsetneq h_L$ in
$\HC$.  Choose $x_t\in h_t\setminus h_{t-1}$ for $t=1,\dots ,L$.  For
every $t$, the set $h_t$ intersects $\bc{x_1,\dots ,x_L}$ in exactly
the first $t$ points, hence $\bc{x_1,\dots ,x_L}$ and \(\bc{h_0, \ldots, h_L}\) are the witness sets of
length $L$.  Thus, $\tdim{\HC}\geq\depth{\HC}$.

\noindent(\(\tdimop\Rightarrow\depthop\)) By definition of threshold
dimension, there are points \(x_1<\dots<x_L\in\cX\) and hypotheses  
\( \tilde h_0, \tilde h_1,\ldots, \tilde h_L\in\HC\) such that
$\tilde h_t\cap\{x_1,\dots ,x_L\}=\{x_1,\dots ,x_t\}$ for every $t=0,\dots
,L$.  Define \(h_t \coloneqq \bigcap_{j \geq t} \tilde h_j \in \HC\). Since $x_{t+1}\notin \tilde h_t$, while $x_{t+1}\in \tilde h_{j}$ for all \(j \geq t+1\), each
inclusion $h_t\subsetneq h_{t+1}$ is strict. Hence $\br{h_0,\dots
,h_L}$ is a chain of length $L=\tdim{\HC}$ that lies entirely in
$\HC$. Thus,~\(\depth{\HC}\geq L=\tdim{\HC}.\)
\end{proof}

\begin{restatable}{lem}{threshhccupper}\label{lem:threshdim-upper}
    Let $\HC\subseteq2^\X$ a hypothesis class and $\HCC$ the closure of this class. Then $\tdim{\HCC}\leq\abs{\HC}$.
\end{restatable}  
\begin{proof}
    Recall the definition of $\HCC=\{\bigcap_{h\in S}h|S\subseteq \HC\}$ and let $|\HC|=k<\infty$.
    Let \(\bar h_0 \supsetneq \bar h_1 \supsetneq \ldots \supsetneq \bar h_{\ell}\) be an arbitrary chain in \(\HCC\).
    Note that for each \(0 \leq t \leq \ell\) we have \(\bar h_t = \bigcap_{j \leq t} \bar h_j\). 
    Also, for each \(1 \leq t \leq \ell\), there exists \(x_t\), such that \(\bar h_t(x_t) = 1 \neq 0 = \bar h_{t+1}(x_t)\). 
    Therefore, all \(x_t\)'s are distinct and we obtain \(\ell \leq \abs{\HC} = k\). The claim follows from~\Cref{lem:tdimeqdepth}.
\end{proof}

\exthreshbounds*
\begin{proof}[Proof of \Cref{prop:exthresh_bounds}]
    For the first inequality, consider \(x_1, \ldots, x_k\) and \(h_0, h_1, \ldots, h_k\) be the witness sets for \(\cH\). Fix \(f \subseteq \cX\) and let \(S \coloneqq \bc{x_1, \ldots, x_k} \setminus f\). Without loss of generality assume that \(s \coloneqq \abs{S} \geq k / 2\). This is true because of the observation that $\tdim{\HC^f}=\tdim{\HC^{(1-f)}}$ where $(1-f)\coloneqq\bc{x\in\X\mid f(x)=0}$ because of symmetries. Let \(x_{i_1}, \ldots, x_{i_s}\) be the elements of the set \(S\) with \(i_1 < i_2 < \ldots < i_s\). Then, we claim that \(x_{i_1}, \ldots, x_{i_s}\) and \(h_0, h_{i_1}, \ldots, h_{i_s}\) is a valid witness set (possibly not of the largest cardinality) for \(\cH^f\).
    Indeed, this is a valid witness set for \(\cH\) by construction, and since \(S \cap f = \emptyset\), it remains valid for \(\cH^f\). Therefore, \(\tdim{\overline{\cH^f}} \geq \tdim{\cH^f} \geq k / 2 = \tdim{\cH} / 2\). Since the inequality holds for any \(f \subseteq \cX\), we have 
    \begin{equation*}
        \extdim{\cH} = \min_{f \subseteq \cX} \tdim{\overline{\cH^f}} \geq \tdim{\cH} / 2. 
    \end{equation*}
    The second inequality follows by definition of $\extdim{\HC}$ and the last inequality follows from \Cref{lem:threshdim-upper}.\\
    The statement for intersection-closed classes follows immediately as \(\overline{\cH} = \cH\)  for intersection-closed \(\cH\) and see \Cref{prop:exthresh_blowup} for a construction of $\HC^\ast$.
\end{proof}

While for intersection-closed classes we have \(\extdim{\cH} = \Theta(\tdim{\cH})\), the former quantity can be much larger for certain non intersection-closed hypothesis classes, as the next result shows.

\begin{prop}\label{prop:exthresh_blowup}
    For any \(N \geq 2\), there exists a hypothesis class \(\cH^\ast\) over the domain \(\cX = [2N]\), such that \(\tdim{\cH^\ast} = 3\), but \(\extdim{\cH^\ast} \geq N\).
\end{prop}

\begin{proof}
    Let \(\cH_s^N \coloneqq \bc{\bI\bc{ \cdot = i} \text{ for } i \in [N]} \cup f_0 \subseteq 2^{[N]}\) be the class of singletons (also known as the \(\mathrm{Point}\) class) and \(\cH_{rs}^N \coloneqq \bc{\bI\bc{ \cdot \neq i} \text{ for } i \in [N]} \cup f_1 \subseteq 2^{[N]}\) be the class of reverse singletons, where \(f_0 \equiv 0\) and \(f_1 \equiv 1\). Let \(\cH = \cH_s^{2N} \cup \cH_{rs}^{2N}\). Note that for any \(h \in \cH\), we have \[\min \br{\abs{\bc{x \in \cX, \text{ s.t. } h(x) = 0}}, \abs{\bc{x \in \cX, \text{ s.t. } h(x) = 1}}} \leq 1.\] Assume that \(\tdim{\cH} \geq 4\) and let \(x_1, \ldots, x_4\) and \(h_0, h_1, \ldots, h_4\) be the witness sets. We arrive at contradiction, since neither of the elements of \(\cH\) could be \(h_2\), which has \[\min\br{\abs{\bc{x \in \cX, \text{ s.t. } h_2(x) = 0}}, \abs{\bc{x \in \cX, \text{ s.t. } h_2(x) = 1}}} = 2.\] Thus, \(\tdim{\cH} \leq 3\), and it is easy to construct witness sets of cardinality \(3\).

    For the second part, fix \(f \subseteq \cX\). From the symmetries of \(\cH\), we can assume without loss of generality that \(f = [k]\) for \(k \leq N\). Then, if we restrict \(\cH\) to \([N + 1, \ldots, 2N]\), we observe that it contains a copy of \(\cH_{rs}^N\). The claim follows, since \(\tdim{\overline{\cH_{rs}^N}} \geq N \).
\end{proof}

Lastly, we provide a result showing that, in some cases, finding a good $f$-representation leads to a significant improvement over simply running~\Cref{alg:closure_explicit} with $\HC$. We show that there exists an $f$-representation for which $\HC^f$ is intersection-closed and the Extended Threshold dimension is bounded, while $\tdim{\HCC}$ can be arbitrarily large. 
\begin{prop}
    For any $N\geq 2$, there exists a hypothesis class $\HC$ over $\X=[N]$, such that $\tdim{\HCC}=N$ while $\extdim{\HC}\leq2$. 
\end{prop}

\begin{proof}
    Consider the class of reverse singletons \(\cH_{rs}^N \coloneqq \bc{\bI\bc{ \cdot \neq i} \text{ for } i \in [N]}\cup f_1 \subseteq 2^{[N]}\) where $f_1$ is the all-ones function on $\X=[N]$. This class is union-closed and there exists an $f$-representation for $f=1$ such that $\HC^f$ is intersection-closed. In this case, $\extdim{\HC}\leq\tdim{\overline{\HC^f}}=2$, where the inequality holds by definition of the Extended Threshold dimension, whereas $\tdim{\HCC}=N$. 
\end{proof}

\subsection{$f$-representations for classes with VC-dimension 1}\label{app:f_for_vcd1}

We restate the result from \cite{ben-david20152NotesClasses} that characterizes classes with VC~dimension~1:

\begin{defn}
Let $(\X,\preceq)$ be a partial order. We call $h\in\HC$ an initial segment w.r.t.\ $\preceq$ if
\begin{itemize}
  \item For all $x,y\in\X$, if $x\preceq y$ and $h(y)=1$, then $h(x)=1$; 
  \item If $h(y)=h(z)=1$ for some $y,z\in\X$, then either $y\preceq z$ or $z\preceq y$. 
\end{itemize}
Equivalently, $h\in \{\emptyset\}\cup\{I_x:x\in\X\}$ where $I_x:=\{y\in\X:y\preceq x\}$.
\end{defn}
\noindent
Reading the proof of Theorem 4 in \cite{ben-david20152NotesClasses} carefully, we see the following holds true:
\begin{thm}[\cite{ben-david20152NotesClasses}, Theorem 4]\label{thm:ben-david_notes}
Let $\cH\subseteq 2^{\cX}$ be a hypothesis class over some domain $\X$. The following are equivalent:
\begin{enumerate}
    \item $\vcd{\HC}\leq 1$,
    \item All representations $f\in\HC$ induce a tree ordering over $\X$ such that every element of $\HC^f$ is an initial segment under that ordering relation.
\end{enumerate}
\end{thm}
The next lemma summarizes invariance under certain $f$-representations for classes with VC dimension 1.
\begin{lem}\label{lem:vc1_class}
    For a hypothesis class $\HC$ and any $f$-representation $\HC^f$, it holds \(\vcd{\HC}=\vcd{\HC^f}\).\\ %
    Furthermore, when $\vcd{\HC}=1$, for any $f \in \HC$ one has 
    \[\vcd{\HC^f}=\vcd{\overline{\HC^f}}=1 \quad \text{and} \quad\tdim{\HC^f}=\tdim{\overline{\HC^f}}.\]
\end{lem}
\begin{proof}
    For the first statement for general $f$, the VC dimension result was shown in \cite{yan2025tilde} and mentioned in \cite{ben-david20152NotesClasses}\\
    For the second part, we first note that the intersection of two initial segments is again an initial segment:
    Let $h_1,h_2\in\HC^f$ and \(x_1, x_2 \in h_1 \cap h_2 \eqqcolon h_3\). 
    By~\Cref{thm:ben-david_notes}, \(h_1\) and \(h_2\) are initial segments, we can assume w.l.o.g. that \(x_1 \preceq x_2\). 
    If $h_3(x_2)=1$, we have by construction $h_1(x_2)=h_2(x_2)=1$, which implies \(h_1(x_1) = h_2(x_1) = 1\) and \(h_3(x_1) = 1\). Thus, $h_3$ is an initial segment. 
    
    Next, $\vcd{\HC^f}=\vcd{\overline{\HC^f}}$ follows directly for $f\in \HC$ as in \Cref{thm:ben-david_notes}, because $\overline{\HC^f}$ consists only of initial segments and thus has VC dimension 1. 
    For the threshold dimension, $\tdim{\HC^f}\leq\tdim{\overline{\HC^f}}$ follows immediately as $\HC^f\subseteq \overline{\HC^f}$. We show $\tdim{\overline{\HC^f}}\leq\tdim{\HC^f}$: let $\tdim{\overline{\HC^f}}=k$ and $x_1,\dots, x_k$ with $\bar{h}_0,\bar{h}_1,\dots,\bar{h}_k$ be the witness sets for $\overline{\HC^f}$. For \(j \in [k]\), let \(I_j \subseteq \HC^f\) be such that $\bar{h}_j=\bigcap_{h_i\in I_j} h_i$, and note that $\bar{h}_j$ is an initial segment. 
    Since \(\bar h_k(x_j) = 1\) for all \(0 \leq j \leq k\), we have that for all \(i, j\), either \(x_i \preceq x_j\) or \(x_j \preceq x_i\). Next, for all \(j \in [k - 1]\), it must hold that \(x_{j} \prec x_{j+1}\), otherwise \(\bar h_j\) would violate the property of the initial segment.
    
    We show that for each \(j \in [k]\), there exists $h^\ast_{j}\in I_j$, such that $x_1,\dots, x_k$ and $h^\ast_{0},h^\ast_{1},\dots,h^\ast_{k}$ are the valid witness sets for $\HC^f$, i.e., for all \(\ell \in [k]\), $\bI\bc{j\geq \ell}=\bar{h}_j(x_\ell)=h^\ast_{j}(x_\ell)$. 

    Since \(\bar{h}_j(x_{j+1}) = 0\), we can set \(h^*_j\) to a hypothesis \(h^*_j \in I_j\), such that \(h^*_j(x_{j+1}) = 0\).
    For such choice of $h^\ast_{j}$ we have $\bar{h}_j(x_\ell)=h^\ast_{j}(x_\ell)$ for all $\ell\in[k]$, as
    \begin{itemize}
        \item for $\ell\leq j$: $\bar{h}_j(x_\ell)=1$ and thus $h^\ast_j(x_\ell)=1$,
        \item for $\ell>j$: as $h^\ast_{j}$ is an initial segment and $h^\ast_{j}(x_{j+1})=0$, we have that $h_{j^*}(x_{\ell})=0$. 
    \end{itemize}
    This finishes the proof, as we showed that $\tdim{\HC^f}\geq k = \tdim{\overline{\HC^f}}$.
\end{proof}

    We showed in \Cref{prop:exthresh_blowup} that taking the closure of a class can make the threshold dimension arbitrarily large. The same holds for VC dimension: let $\X=[N]$ and consider the class of reverse singletons $\cH_{rs}^N \;:=\; \bigl\{\, h_i(x)=\bI\bc{x\neq i}\ :\ i\in[N] \bigr\} \,\cup\, \bc{f_1}, $ where $f_1\equiv 1$. Clearly $\vcd{\cH_{rs}^N}=1$. However, its closure includes all binary functions over $\X$ and thus $\vcd{\overline{\cH_{rs}^N}}=N$.

\Cref{lem:vc1_class} gives an efficient way to find an $f$-representation for classes of VC dimension 1, such that the closure algorithm is optimal up to constants. This follows directly from \Cref{thm:general-adver,thm:general-stoch}.

\subsection{Trap Region}
\label{sec:trap-region-app}
Next lemma shows that if the learner admits a non-empty \Gls{trap} at any time $t\in[T/2]$, then an adaptive or stochastic adversary can make the (expected) number of mistakes scale linearly in $T$.

\begin{restatable}{lem}{nozerooneregion}\label{thm:no_trap}
    Consider a learner \(\cA\) in the replay setting. If there exists \(t^\ast \in [T/2]\) such that \(\Trp_{t^\ast} \neq \emptyset\), then:
    \begin{enumerate}
        \item \textbf{Adaptive adversary}: The replay adversary in the adaptive setting can pick samples at steps \(t = t^\ast + 1, \ldots, T\), such that the number of mistakes of \(\cA\) is linear in $T$.
        \item \textbf{Stochastic adversary}: The replay adversary in the stochastic setting can make \(\cA\) incur an expected number of mistakes linear in $\arg \max_{x^\ast \in \Trp_{t^\ast}} T \mu(x^\ast)$, where \(\mu(\cdot)\) is the measure of the distribution \(\cD\). %
    \end{enumerate}
\end{restatable}

\begin{proof}[Proof of \Cref{thm:no_trap}]
     \noindent\textbf{Adaptive adversary}: Pick an \(x^\ast \in \Trp_{t^\ast}\) and let $h, h'\in \HC_{t^\ast-1}$ and $f,f'\in\VS_{t^\ast}$ be as in~\Cref{defn:trapt} for such \(x^\ast\). The adversary now continues to play $x^*$ for $t'>t$ with the opposite label as the learner predicted $y_{t'}=1-h_{t'}(x^\ast)$. This is possible, as both labels can be produced through replay of an old hypothesis, since $\{0,1\}=\{h(x^\ast),h'(x^\ast)\}$. Furthermore, both labels could also come from a hypothesis $f$ or $f'$ as $\{0,1\}=\{f(x^\ast),f'(x^\ast)\}$ and $f,f'\in\VS_{t'}$ for all $t'>t^\ast$. The last statement holds because  $f,f'\in\VS_{t^\ast}$ by assumption and $\VS_{t'}=\VS_{t^\ast}$ for all $t'\geq t^\ast$. 
     
     Indeed, the adversary that continues playing $x^*$ does not modify $I_{t'}$ (recall~\Cref{defn:version_space}) because there exists an $h_k\in\{h,h'\}\in\HC_{t^\ast-1}$ such that  $h_k(x_{t'})= y_{t'}$ and therefore $I_{t'}=I_{t^\ast}$.
    In the end, the adversary sets $f^*\in\{f,f'\}$ depending on which incurs more mistakes. By the pigeon-hole principle, the learner can be forced to make at least $\frac{T}{4}=\Omega(T)$ mistakes.

    \noindent\textbf{Stochastic Adversary}: Pick \(x^\ast \in \arg \max_{x \in \Trp_{t^\ast}} \mu(x)\) and let $h, h'\in \HC_{t^\ast-1}$ and $f,f'\in\VS_{t^\ast}$ again be as in~\Cref{defn:trapt}. Let the samples $x_{t^\ast+1},\dots,x_T\overset{\iid}{\sim}\cD$.
    For any $t'>t^\ast$, the adversary chooses to give the opposite label $y_{t'}=1-h_{t'}(x_{t'})$ whenever $x_{t'}=x^*$ and the replayed label $y_{t'}=h_{t^\ast}(x_{t'})$ for $x_{t'}\neq x^*$. 
    
    By the same arguments as before, this labeling is possible as both labels 0 and 1 could be replayed or come from some true $f^*\in\{f,f'\}$. Note that we also have $I_{t'}=I_{t^\ast}$ as there exists $ h_k\in\HC_{t^\ast}$, such that $y_{t'}=h_k(x_{t'})$: for $x_{t'}=x^*$ choose such a $h_k\in\{h,h'\}$  and for $x_{t'}\neq x^*$ choose $h_k=h_{t^\ast}$. At the end, the adversary chooses such an $f^\ast\in\{f,f'\}$ which incurs larger number of mistakes, which by the pigeon-hole principle is at least half of the times $x_t=x^*$ is sampled. This gives the following bounds 
    \begin{align*}
        \bE\bs{\mathcal{M}(T)}&\geq\bE\sum_{t'=t^\ast+1}^T\bI\bc{x_{t'}=x^* \text{ and } y_{t'}\neq h_{t'}(x_{t'})  \text{ and } y_{t'}=f^*(x_{t'})}\\
        &=\sum_{t' > t^\ast:\ y_{t'} = f^\ast(x_{t'})} \Pr(x_{t'} = x^\ast)
        \geq\frac{T}{4}\Pr_{x \sim \cD}(x=x^*)=\Omega(T\mu(x^*)),
    \end{align*}
    since 
    the adversary always plays the opposite label as predicted by the learner.
\end{proof}

\begin{example}\label{rem:no_ambiguity-subset} 
    For a learner that starts with predicting $h_{\min}\equiv 0$, the set of points $I_t$ (see~\Cref{defn:version_space}) is a subset of those samples with label 1: $I_t\subseteq \{i\in\{1,\dots,T\}\mid y_i=1\}$. In this case $\VS_t=\{h'\in\HC \mid \forall i\in I_t: h'(x_i)=1\}=\{h'\in \HC\mid \clos_\HC((x_i)_{i\in I_t})\subseteq h'\}$ and \Cref{defn:trapt} simplifies to 
    \begin{equation}
    \Trp_t:=\bc{x\in\X \setminus \clos_\HC((x_i)_{i\in I_t}) \mid \exists h,h' \in \HC_{t-1} \text{ s.t. } h(x)\neq h'(x)}.
    \end{equation}
\end{example}

\begin{restatable}{corollary}{hypochain}\label{cor:hypothesis_chain}
If a learner starts with predicting $h_{\min}$ and at time $t\in[T]$ outputs a hypothesis $h_t$ with $h_t\not\subseteq \clos_\HC((x_i)_{i\in I_{t-1}})$, then \(\Trp_t \neq \emptyset\) and they can be forced to make linear mistakes by \Cref{thm:no_trap}. Furthermore, a learner that outputs $h_t\subsetneq \clos_\HC((x_i)_{i\in I_{t-1}})$ incurs the same or more mistakes as a learner that outputs $h_t=\clos_\HC((x_i)_{i\in I_{t-1}})$.
\end{restatable}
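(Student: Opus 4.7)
My plan has two parts, mirroring the two claims in the statement.

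\textbf{Part 1 (outputting above the closure creates a trap).} The strategy is to exhibit an explicit witness $x^\ast$ in the trap and then invoke \Cref{thm:no_trap}. By hypothesis I can pick $x^\ast \in \hat h_t \setminus \clos_\HC((x_i)_{i \in I_{t-1}})$. Because $h_{\min} = \clos_\HC(\emptyset) \subseteq \clos_\HC(Y)$ for every $Y \subseteq \X$, the starting hypothesis $h_1 = h_{\min}$ satisfies $h_1(x^\ast) = 0$, while $\hat h_t(x^\ast) = 1$ by choice of $x^\ast$; the pair $(h_1, \hat h_t)$ therefore supplies the $(h, h')$ required by \Cref{defn:trapt}. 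The membership $x^\ast \notin \clos_\HC((x_i)_{i \in I_{t-1}})$ simultaneously provides, by the intersection definition of $\clos$, some $f \in \VS_t$ with $f(x^\ast) = 0$; realisability together with the fact that $\hat h_t(x^\ast) = 1$ is an admissible prediction (no past hypothesis or revealed true label contradicts it) gives an $f' \in \VS_t$ with $f'(x^\ast) = 1$. This matches the simplified trap formulation of \Cref{rem:no_ambiguity-subset}, which is tailored to learners starting at $h_{\min}$, so $\Trp \neq \emptyset$ and \Cref{thm:no_trap} forces $\Omega(T)$ true-label mistakes.

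\textbf{Part 2 (outputting strictly below the closure is suboptimal).} The plan is a round-$t$ pointwise coupling, extended to later rounds by induction. Fix an adversary and compare two learners $\cA_0$ and $\cA_1$ that agree on rounds $1, \ldots, t-1$ but at round $t$ output $C_t := \clos_\HC((x_i)_{i \in I_{t-1}})$ and $\hat h_t \subsetneq C_t$ respectively. The crucial observation is that every $f^\ast \in \VS_{t-1}$ contains $(x_i)_{i \in I_{t-1}}$ and hence $C_t$, so $f^\ast(x) = 1$ for every $x \in C_t$. A three-way case split on whether the adversary's query $x_t$ lies in $\hat h_t$, in $C_t \setminus \hat h_t$, or outside $C_t$ shows that either the two learners predict identically or $\cA_1$ makes a true-label mistake where $\cA_0$ does not; meanwhile $\cA_0$ is never wrong on a true-label query inside $C_t$. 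Hence the round-$t$ mistake indicator of $\cA_1$ dominates that of $\cA_0$ pointwise.

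The main obstacle I anticipate is extending the coupling past round $t$, because the two learners produce different sets $\HC_t$ and hence potentially different $\VS_{t'}$ and $\Trp_{t'}$ for $t' > t$. I plan to resolve this by induction on $t' > t$, arguing that any adversary strategy against $\cA_0$ can be simulated against $\cA_1$ with at least as many true-label mistakes, using the fact that swapping $C_t$ for $\hat h_t \subsetneq C_t$ in the history can only enlarge the learner's disagreement region while leaving the adversary's replay options at least as rich. Combined with Part 1, which rules out outputs that leave the closure, this establishes that an optimal learner starting at $h_{\min}$ should output exactly the current closure at every round.
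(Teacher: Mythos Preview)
Your Part 1 follows the same line as the paper: pick $x^\ast \in \hat h_t \setminus \clos_\HC\bigl((x_i)_{i\in I_{t-1}}\bigr)$, use $h_{\min}$ and $\hat h_t$ as the two disagreeing past hypotheses, and exhibit $f,f' \in \VS_t$ taking opposite labels at $x^\ast$. The weak step is your justification for $f'$: the fact that $\hat h_t(x^\ast)=1$ is an ``admissible prediction'' does not imply that some $f'\in\VS_t$ has $f'(x^\ast)=1$ --- a learner is free to predict $1$ on a point where every consistent hypothesis predicts $0$. The paper instead constructs $f'$ directly: take any $f'\in\HC$ with $f'\supseteq \clos_\HC\bigl((x_i)_{i\in I_t}\cup\{x^\ast\}\bigr)$, which then lies in $\VS_t$ and satisfies $f'(x^\ast)=1$. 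This is the construction you should use in place of the admissibility argument.

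For Part 2, the paper's proof is only the single-round pointwise comparison you already sketch: since $\hat h_t\subsetneq C_t$ and every $f^\ast\in\VS_t$ contains $C_t$, any true-label query landing in $C_t\setminus \hat h_t$ is a mistake for $\hat h_t$ but not for $C_t$, while on all other points the two predictions coincide. The paper stops there; it neither mentions nor carries out the coupling past round $t$ that you flag as the ``main obstacle'', and it does not attempt the induction you propose. In other words, your three-way case split already matches the paper's content, and the inductive extension you plan would prove something strictly stronger than what the paper establishes for this corollary.
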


\begin{proof}
Let $h=h_0$ and $h'=h_t$ with $f\in\VS_{t}$ such that $\clos_{\HC}((x_i)_{i\in I_{t-1}})\subseteq f$. Note that by assumption there exists an $x\in\X:h_t(x)=1\neq 0=f(x)$. Set $f'\in\VS_{t}$ such that $\clos_{\HC}((x_i)_{i\in I_t}\cup \{x\})\subseteq f'$ and the conditions for a \Gls{trap} as in \Cref{defn:trapt} with \Cref{rem:no_ambiguity-subset} are satisfied. \\
For the second claim, note that for all $x\notin f$ and $x\in h_t$: $h_t(x)=f(x)$ agree, so the learner incurs no mistakes. However, $\forall x\in f\setminus h_t: h_t(x)=0\neq1=f(x)$ which is a true-label mistake as $f\in\VS_t$.
\end{proof}

\subsection{Closure Algorithm}
\label{sec:clos-alg-app}
\begin{restatable}{lem}{closureconserv}\label{lem:closure-conserv}
Let \(\HCC\) be the intersection closure of a hypothesis class \(\HC\). Let \(S_t = \{x_1, \dots, x_t\}\subset\cX\) be any set of points with label \(1\), and let $h_t = \clos_{\HC}\br{S_t}$~(Line~\ref{step:hyp-close} of~\Cref{alg:closure_explicit}). If there exists \(h' \in \HC\) consistent with \(S_t\), then for any point $x$ where \(h'(x)=0\), it must be that \(h_t(x)=0\).
\end{restatable}

\begin{proof}[Proof of~\Cref{lem:closure-conserv}]
    We prove this by contradiction. Assume there exists a point \(x \in \cX\) and a hypothesis \(h' \in \HC\) such that \(S_t \subseteq h'\)~(\(h'\) is consistent with \(S_t\)), \(h'(x) = 0\) and \(h_t(x) = 1\). By definition of the intersection closure, $h' \in \HC$ implies $h' \in \HCC$. The hypothesis $h_t$ is defined as \(\clos_{\HC}(S_t)\), which is the smallest set in $\HCC$ that contains $S_t$. Since $h' \in \HCC$ and \(h'\) also contains \(S_t\), then it must be the case that $h_t \subseteq h'$.
    
    Our assumption that \(h_t(x)=1\) means \(x \in h_t\). Since \(h_t \subseteq h'\), we have that \(x \in h'\). This contradicts our assumption that \(h'(x)=0\). Thus, the initial assumption must be false.
\end{proof}

\begin{restatable}{lem}{onlyposerr}\label{lem:clos_only_true}
    The closure algorithm,~\Cref{alg:closure_explicit}, only incurs a mistake if $y_t=1$ and $\hat{y}_t=0$. In particular, it is never the case that $y_t=0$ and $\hat{y}_t=1$. 
\end{restatable}
\begin{proof}
     We show that for the true hypothesis $h^*\in\HC$ the update step in algorithm \Cref{alg:closure_explicit} ensures that the prediction $h_t\in\HCC$ satisfies $h_t\subseteq h^*$ for all $t\in\{1,\dots,T+1\}$. This proves the claim. \\
     First note that $\HC\subseteq\HCC$ and thus $h^*\in\HCC$. 
     The claim follows by induction, as initially it holds $h_1=\bigcap_{h\in\HCC}h\subseteq h^*$ because $h^*\in\HCC$.
     Now assume $h_{t}\subseteq h^*$. If we do not update then $h_{t+1}=h_{t}\subseteq h^*$ holds trivially.
     Note that we only trigger the update step $h_{t+1}=\clos_{\HC}(h_{t}\cup \{x_t\})$ if $y_t=1$ and $x_t\notin h_{t}$ (i.e. $\hat{y}_t=h_t(x_t)=0$). Therefore, no old hypothesis could have been used by the adversary for $y_t$, as all old hypothesis predicted 0 on this sample. %
     This implies that $y_t=h^*(x_t)=1$. 
     Thus $h_{t}\subseteq h^*$ and $\{x_t\}\in h^*$ ensures $h_{t}\cup\{x_t\}\subseteq h^*$ and thus $h^*\in\HC$ is consistent.     %
     By \Cref{lem:closure-conserv} the learner that predicts with $h_{t+1}=\clos_{\HC}(h_{t}\cup \{x_t\})$ thus also ensures $h_{t+1}\subseteq h^*$.
\end{proof}

\advgeneral*

\begin{proof}

\noindent\textbf{Upper bound}
    Pick \(f \in \arg \min_{f' \subseteq \cX} \tdim{\overline{\cH^{f'}}}\).
    As shown in~\Cref{rem:f-rep}, we can consider the case of learning where $\HC$ is given already as $\HC^f$, taking the inverse operation then immediately gives $\HC$. Thus, without loss of generality \(f \equiv 0\), which implies \(\cH^f = \cH\). The closure algorithm,~\Cref{alg:closure_explicit}, starts with the hypothesis $\hat h_0  = \cap_{h \in \cH} h \in \HCC$ and upon each mistake on an example $x_t$, it updates its prediction to $h_{t+1}=\clos_{\HC}(h_{t}\cup\{x_t\}) \in \HCC$. Let \(\hat x_1, \ldots, \hat x_{\tau}\) be the samples on which the closure algorithm made a mistake and for each \(k \in [\tau]\) let \(\hat h_k\) be the updated hypothesis after the \(k\)-th mistake. By the construction of the closure algorithm, we have that \(\hat h_0  \subsetneq \hat h_1 \subsetneq \hat h_2 \subsetneq \ldots \subsetneq \hat h_{\tau}\), and furthermore, \(\hat h_i(x_j) = \bI\bc{j \leq i}\). Thus, by the definition of the Threshold dimension, we have that \(\tau \leq \tdim{\overline{\cH^{f}}}\). 
    In general, since the chosen representation \(f\) attains the smallest possible value, we get \(\tau \leq \extdim{\cH}\).

\noindent\textbf{Lower bound} 
    For the lower bound, let \(f = \hat h_1\) be the first output of \(\cA'\). In the following, we will show that it is possible to force \(\cA'\) to make \(\Omega\br{\tdim{\overline{\cH^{f}}}}\) mistakes.
    Again, without loss of generality (see~\Cref{rem:f-rep}), assume that \(f \equiv 0\), so that \(\cH^f = \cH\).

    Let \(x_1, \ldots, x_k\) and \(h_0 \subsetneq h_1 \subsetneq \ldots \subsetneq h_k\) be the witness sets for \(\overline{\cH}\) with \(h_k = \bigcap_{h \in K} h\) for some \(K \subseteq \HC\).

    For the steps \(t = 1, \ldots, k\), the adversary queries \(\cA'\) with points \((x_t, 1)\). Let \(\hat h_t\) be the hypothesis that \(\cA'\) uses at step \(t\). If for all \(t, s \in [k]\), it holds that \(\hat h_{t}(x_s) = \bI\bc{s < t}\), then we can pick any \(f^\ast \in K\) to obtain \(\cM_T(\cA') \geq k\).
    
    Otherwise, there exist \(t \in [k]\) and \(s \geq t\), such that \(\hat h_t(x_s) = 1\). Let \(t^\ast\) correspond to the smallest such \(t \in [k]\), and \(s^\ast\) to the corresponding \(s \in [k]\). 
    By the definition of \(\HCC\) and since \(\hat h_{s^\ast - 1} \subsetneq h_{s^\ast}\), there exist two sets \(A \subsetneq B \subseteq \cH\), such that \(h_{s^\ast - 1} = \bigcap_{h \in B} h\) and \(h_{s^\ast} = \bigcap_{h \in A} h\). Pick \(f_1\) arbitrarily from \(A\) and \(f_2\) from \(B \setminus A\) and note that 
    \begin{equation}
        1 = f_1(x_{s^\ast}) \neq f_2(x_{s^\ast}) = 0 \qquad \text{ and } \qquad f_1(x_s) = f_2(x_s) = 1 \quad \text{ for all } s \in [s^\ast - 1].
    \end{equation}
    Thus, \(f_1, f_2 \in \VS_{s^\ast}\) and we showed that 
    \[
    \bc{f_1(x_{s^\ast}), f_2(x_{s^\ast})} =  \bc{\hat h_0(x_{s^\ast}), \hat h_t(x_{s^\ast})} = \bc{0, 1},
    \]
    which implies that \(x_{s^\ast} \in \Trp_{t^\ast}\). The claim then follows from~\Cref{thm:no_trap}.    
    Since the lower bound holds for any hypothesis \(f\) first output by \(\cA'\), we obtain that for all learners \(\cA\), we have \(\cM_T(\cA) = \Omega(\min_{f \subseteq \cX} \tdim{\overline{\cH^f}}) = \Omega(\extdim{\cH})\). 
\end{proof}

\subsection{Stochastic Adversary}
\label{app:stoch-general}

\begin{restatable}[General Upper Bound - Stochastic Adversary]{lem}{ubstochgeneral}\label{thm:general-upper-stoch}
Let $\HC\subseteq2^{\X}$ be intersection-closed with VC dimension
$d_{\mathrm{vc}}$ and $T\in\N^+$ be the time horizon.
For any $T$ there exists a learner $\cA$ (the
closure algorithm,~\Cref{alg:closure_explicit}) whose expected mistakes satisfy $\bE\bs{\cM_T(\cA)} = O\bigl(\min\br{\extdim{\HC},d_{\mathrm{vc}}\log T}\bigr).$\\ 
\end{restatable}

\begin{proof}
    The upper bound contains two terms: a deterministic term depending
    on the Threshold dimension, and a stochastic term depending
    on the VC dimension. From the properties of the closure algorithm on intersection-closed classes, it is enough to consider the case~\(f^\ast \equiv 1\) (see~\Cref{lem:clos_only_true}). And furthermore, as noted in \Cref{rem:f-rep}, we can consider learning $\HC^f$ from the beginning thus throughout the proof w.l.o.g. consider $\HC$ and ignore the flipping in the closure algorithm. \\

\noindent\emph{(i) Deterministic term:} The closure algorithm starts
at the minimal element $h_{\min}\in\HC$ and, each time it errs on a
positive example, replaces its current hypothesis $\hat h_t$ by the strict
superset $\hat h_{t+1}=\clos_{\HC'}(h_t\cup\bc{x})$. According to the argument in~\Cref{thm:general-adver}, this can be upper bounded in general by $\tdim{\HCC}=\tdim{\HC}$ as $\HC$ is intersection closed.  %
Thus,
\begin{equation}\label{eq:det-ub}
    \cM_T(\cA)\le\tdim{\HC}.
\end{equation}

\noindent\emph{(ii) Stochastic term:} Fix a round \(t\in\{1,\dots ,T\}\) and consider samples $x_i\overset{\iid}{\sim}\cD$. Set  
\[
      h_{t-1}=\clos_\HC\br{\bc{x_1,\dots ,x_{t-1}}}
\]
be the hypothesis produced by the closure algorithm before seeing
\(x_t\).   Because each earlier example is labelled \(1\), the empirical error of \(h_{t-1}\) on \(\bc{x_1,\dots ,x_{t-1}}\) is
zero. A mistake at round \(t\) occurs exactly when \(x_t\) falls
outside \(h_{t-1}\), see \Cref{lem:clos_only_true}.  Define the mistake set for a given hypothesis $h_{t-1}$
\[
      E_{t-1}:=\cX\setminus h_{t-1},
      \qquad\text{so}\qquad
      \Pr\bs{\text{mistake at }t\mid h_{t-1}}=\Pr\bs{x_t\in E_{t-1}\mid  h_{t-1}}=\mu\br{E_{t-1}}.
\]

We now use results from the PAC-learning framework and derive bounds on the expected loss. We consider the learning algorithm $\mathcal{A}(S)=\clos_{\HC}\br{x_1,\dots,x_n}$ for an i.i.d.\ sample $S=\{(x_1,f^*(x_1)),\dots, (x_n,f^*(x_n))\}$. 
Define the error of a classifier as \[L_{\cD,f^*}(\mathcal{A}(S))=\Pr_{x\sim\cD}(\mathcal{A}(S)(x)\neq f^*(x))\] and note that for the closure algorithm this is $L_{\cD,f^*}(\mathcal{A}(S))=\mu(E_{n})$. We know from PAC-learning bounds on the sample complexity (as shown in \cite{darnstadt2015OptimalPACBound}) that for any \(0<\epsilon,\delta<1\)  the closure algorithm $\mathcal{A}$ taking a sample of size $n\geq \frac{c}{\varepsilon}(\vcdim+\log(\frac{1}{\delta}))$, where $c$ is some constant and $\vcdim$ the VC-dimension of $\HC$, has $\Pr_{S\sim\cD^n}(\mu(E_{n})\geq\varepsilon)\leq \delta$. \\
From this, we derive bounds for the expected loss of the learned classifier: in particular for $\delta_\varepsilon=\exp\br{-\varepsilon n/c+d}$ one has
\begin{equation*}
    \bE_{S_n\sim \mathcal{D}^n}\bs{\mu(E_{n})}=\int_{\varepsilon=0}^\infty \Pr\br{\mu(E_{n})>\varepsilon} \mathrm{d}\varepsilon \leq \int_{\varepsilon=0}^\infty \exp\br{-\varepsilon n/c+\vcdim}  \mathrm{d}\varepsilon\leq c^*\frac{\vcdim}{n}
\end{equation*}
for some constant $C^*>0$. \\

Transferring these results to the online setting, where at each time we have a sample of $t-1$ labelled data points, from which we output a classifier, we get the following
\begin{align}\label{eq:stoch-ub}
\begin{split}\bE_{x_1,\dots,x_T}&\bs{\cM\br{T}}=\bE_{x_1,\dots,x_T}\bs{\sum_{t=1}^T\mathbb{I}\bc{\text{mistake at time }t}}= \bE_{x_1,\dots,x_T}\bs{\sum_{t=1}^T\mathbb{I}\bc{E_{t-1}}(x_t)} \\
    &=\sum_{t=1}^T\bE_{x_1,\dots,x_t}\bs{\mathbb{I}\bc{E_{t-1}}(x_t)}=\sum_{t=1}^T\bE_{x_1,\dots,x_{t-1}}\bs{\mu(E_{t-1})} \leq \sum_{t=1}^T C\frac{\vcdim}{t-1}=\bigO{\vcdim\log T}.
    \end{split}
\end{align}

Combining~\cref{eq:det-ub,eq:stoch-ub} completes the proof.

\end{proof}

\begin{restatable}[General Lower Bound - Stochastic Adversary]{lem}{lbstochgeneral}\label{thm:general-lower-stoch}
Let $\HC\subseteq2^{\X}$ be a hypothesis class and $T\in\N^+$ be the time horizon.
There exists a distribution $\cD$ over $\X$ and $f^*\in\HC$ such that any deterministic learner $\cA'$  in the replay setting has expected error \[\bE\bs{\cM_T(\cA')}=\Om{\min\{\extdim{\HC},\log T \}}.\]
\end{restatable}

\begin{proof}
The proof combines the arguments of~\Cref{thm:general-adver} and~\Cref{thm:thresh-stoch}. 

Let \(f\) be the first hypothesis that \(\cA'\) yields. Since \(\cA'\) is deterministic, \(f\) is known to the adversary. Again, arguing as in~\Cref{rem:f-rep}, we assume w.l.o.g. that the learner starts with $\HC=\HC^f$ where $f\in\arg\min_{f \subseteq \cX} \tdim{\overline{\cH^f}}$ and $\tdim{\HCC}=\extdim{\HC}$. Therefore, in the following, let \(f \equiv 0\).

We consider here the case \(\log T > 2 \tdim{\overline{\cH}}\) and refer to~\Cref{thm:thresh-stoch} for the other case.
Pick \(x_1, \ldots, x_{\tau}\) and \(h_0, \ldots, h_{\tau}\), the witness set for \(\tdim{\overline{\cH}}\) and consider the order \(x_1 < x_2 < \ldots < x_\tau\). Define a distribution \(\cD\) on \(x_1, \ldots, x_\tau\) as in the lower bound proof of~\Cref{thm:thresh-stoch}: 
    $\Pr\br{x=x_k}=\frac{1}{2\cdot3^{k-1}}$ for $k=\{2,\dots,\tau\}$ and $\Pr\br{x=x_1}=1-\sum_{k=2}^\tau \Pr\br{x=x_k}$.

Let \(\hat x_1, \ldots, \hat x_T \overset{\iid}{\sim} \cD\) be the sampled points and define \(t_k \coloneqq \min \bc{t \in [T / 2], \text{ such that } \hat x_t \geq x_k}\) with \(t_k = \infty\) when \(\hat x_t < x_k\) for all \(t \in [T/2]\).
By construction we have \(1 = t_1 \leq t_2 \leq \ldots \leq t_{\tau} \leq \infty\).
Define \(A_k \coloneqq \bc{t_k < \infty \text{ and } \hat x_{t_k} = x_k}\). Arguing as in~\Cref{eq:tk_finite,eq:prob_ratio,eq:pr_ak} we get that \(\Pr(A_k) \geq 1/3\). 

To finish the proof, we consider two cases, as in~\Cref{thm:thresh-stoch}. Let \(\hat h_k\) be the hypothesis used by the learner the first time \(x_k\) was sampled from \(\cD\).

\noindent \textbf{First case.} Either, for each \(k \in [\tau]\), such that \(\hat x_{t_k} = x_k\), \(\hat h_k(x_k) = 0\). Then, when presented with positive label for \(x_k\), the learner makes a mistake, and we obtain \(\Omega(\tdim{\overline{\cH}})\) mistakes in total when \(f^\ast = h_{\tau}\).

\noindent \textbf{Second case.}
Otherwise, there exists \(k \in [\tau]\), such that \(\hat x_{t_k} = x_k\) and \(\hat h_k(x_k) = 1\). 
Here, we argue that the learner introduced a non-empty \(\Trp\) region similar to~\Cref{thm:general-adver}. This, by~\Cref{thm:no_trap}, implies that there exists an adversarial strategy to label future points, so that the learner will make \(\Omega(\tdim{\overline{H}})\) mistakes for this distribution.

\end{proof}

\stochgeneral*
\begin{proof}
    Combining \Cref{thm:general-upper-stoch} and \Cref{thm:general-lower-stoch} gives the claim. 
\end{proof}

\begin{restatable}[Convex Bodies - Stochastic Adversary Upper Bound]{lem}{stochconvexupper}
\label{lem:convex-stoch-upper}
Let \(d \in \mathbb{N}\), and define \(\HC_d = \{ C \subseteq \mathbb{R}^d : C \text{ is convex} \}\) as the class of all convex subsets of \(\mathbb{R}^d\). 
Let \(\mu\) be a probability distribution supported on a fixed convex set \(C^* \subseteq \mathcal{B}_1^d\), where \(\mathcal{B}_1^d\) denotes the unit ball in \(\mathbb{R}^d\).
Assume that \(\mu\) admits a density \(f\) which is upper bounded by some constant \(M<\infty\).  Then
there exists a learner $\cA$ (the
closure algorithm,~\Cref{alg:closure_explicit})
, such that for any time horizon \(T\ge 1\) the expected mistakes satisfy
\[
   \bE\bs{\cM_T\br{\cA}} =
   \begin{cases}
        \bigO{\log T} & \text{if }d=1,\\
        \bigO{T^{\frac{d-1}{d+1}}} & \text{if }d\ge 2.
     \end{cases}
\]
\end{restatable}
\begin{proof}
Consider $f^*=1$ and the convex-hull algorithm starting from $h_0=\emptyset$. 
After $t-1$ points the learner predicts with
\(h_{t-1}=\conv{x_1,\dots,x_{t-1}}\). A mistake at round $t$ occurs for the closure algorithm
iff $x_t\notin h_{t-1}$, \textit{i.e.} \(x_t\) is outside the convex
hull of all points seen so far, see \Cref{lem:clos_only_true}. We denote this error region as
\(E_{t-1}:=C^\star\setminus h_{t-1}\). The probability of error in round $t$, is thus given by
\begin{equation}\label{eq:missing-mass}
       \Pr_{x_t\sim\cD}\bs{\text{mistake at }t}=\Pr_{x_t\sim\cD}\bs{x_t\in E_{t-1}}=\mu\br{E_{t-1}}=\int_{E_{t-1}}f(x)dx.
\end{equation}
Results from \citep[Corollary 1]{Brunel2020Deviation} show\footnote{\(d_f(C^\ast,h_n)\) in~\citep{Brunel2020Deviation} is
precisely the integral in~\Cref{eq:missing-mass} for \(n=t-1\) and the
inequality follows by setting \(q=1\) and choosing the appropriate
constants.}
that there exists some constant \(C\) that depends only on
\(d, M\) such that,
\[
   \bE\bs{\mu\br{E_{t-1}}} \le C\br{t-1}^{-\nicefrac{2}{\br{d+1}}}.
\]
\noindent This holds for all \(t,d\ge 1\).\\
For \(d=1\), we have
\begin{align*}
    \bE_{x_1,\dots,x_{t}}\bs{\cM(T)} & = \bE_{x_1,\dots,x_{t}}\bs{\sum_{t=2}^T\mathbb{I}\bc{x_t\notin\conv{x_1,\dots,x_{t-1}}}+1} = \sum_{t=2}^T \bE_{x_1,\dots,x_{t-1}}\bs{\Pr_{x_t}\br{x_t\in E_{t-1}}}+1\\
    &=\sum_{t=2}^T \bE_{x_1,\dots,x_{t}}\bs{\mu(E_{t-1})}+1 \le C\sum_{t=2}^{T}\br{t-1}^{-1}+1 =\bigO{\log T}.
\end{align*}
   For \(d\ge 2\), using the same arguments we have
   \[\begin{aligned}
   \bE_{x_1,\dots,x_{t}}\bs{\cM(T)}&\le C\sum_{t=2}^{T}\br{t-1}^{-\nicefrac{2}{\br{d+1}}}+1\leq C\br{\int_{1}^{T-1}x^{-2/(d+1)}dx}+2\\
   &=C\br{\frac{d+1}{d-1}\br{(T-1)^{(d-1)/(d+1)}}}+2=O\br{T^{\frac{d-1}{d+1}}}.
   \end{aligned}
    \]
\end{proof}

\begin{restatable}[Convex Bodies - Stochastic Adversary Lower Bound]{lem}{stochconvexlower}
\label{lem:convex-stoch-lower}
    there exists a distribution $\cD$ and $f^*\in\HC$ such that any learner $\cA'$ in the replay setting has expected mistakes after round $T$
    \[
   \bE\bs{\cM\br{T}} =
   \begin{cases}
        \Om{\log T} & \text{if }d=1,\\
        \Om{T^{\frac{d-1}{d+1}}} & \text{if }d\ge 2.
     \end{cases}
\] 
\end{restatable}

\begin{proof}
For the lower bound, we consider $\cD$ to be the uniform distribution on a convex body $C^*$. \Cref{cor:hypothesis_chain} shows that the convex-hull algorithm (i.e. closure algorithm) is optimal as any other learner either incurs more error or creates a non-empty \gls{trap}. In particular, for a non-empty \gls{trap} the replay adversary can incur linear error in $T$ as the uniform distribution $\cD$ has constant measure $\mu$, see \Cref{thm:no_trap}.
Therefore, we can restrict ourselves to show that there exists a distribution $\cD$ (uniform distribution) and $f^*\in\HC$ such that the convex-hull algorithm, i.e. \emph{closure algorithm} incurs the claimed lower bounds. \\
This can be seen by using a lower bound argument for convex bodies from \cite[Theorem 5.2]{mammen1995AsymptoticalMinimaxRecovery}, to show that there exists constants $c$ which depend only on $d$ such that\footnote{For details see \cite[Corollary 4]{Brunel2020Deviation}} 
\begin{equation*}
    c (t-1)^{-\nicefrac{2}{d+1}}\leq \sup_{\text{ for convex body } C\subseteq C^*}\bE_{x_1,\dots,x_{t-1}\sim\cU(C)}\bs{\mu_C(E_{t-1})}
\end{equation*}

By the same arguments, one gets for $d=1$:
\begin{align*}
    \bE_{x_1,\dots,x_{t}}\bs{\cM(T)}=\sum_{t=2}^T \bE_{x_1,\dots,x_{t}}\bs{\mu(E_{t-1})}+1 \geq c\sum_{t=2}^{T}\br{t-1}^{-1} =\Om{\log T}.
\end{align*}
and for $d\geq 2$:
\begin{align*}
    \bE_{x_1,\dots,x_{t}}\bs{\cM(T)}=\sum_{t=2}^T \bE_{x_1,\dots,x_{t}}\bs{\mu(E_{t-1})}+1 \geq c\sum_{t=2}^{T}\br{t-1}^{-\nicefrac{2}{\br{d+1}}}+1\\
    \geq c\br{\int_{2}^{T}x^{-2/(d+1)}dx}=\Om{\log T}.
\end{align*}

\end{proof}

\begin{proof}[Proof of \Cref{thm:convex-stoch}]
    Combining \Cref{lem:convex-stoch-lower} and  \Cref{lem:convex-stoch-upper} gives the claim. 
\end{proof}

\subsection{Intersection-Closed Classes}
\label{app:proper-int-closed}

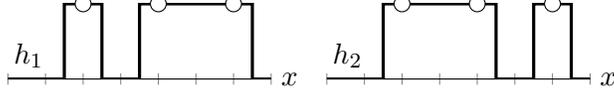
\begin{figure}
    \centering
    \begin{tikzpicture}
  \begin{axis}[
    x=0.5cm,y=1cm,
    axis lines=middle, %
    axis y line=none,
    axis line style={-}, %
    xlabel={$x$}, %
    ylabel={$y$},
    xlabel style={at={(ticklabel* cs:1.0)}, anchor= west}, %
    ymin=0, %
    ymax=1, %
    xmin=0, %
    xmax=7, %
    xtick= {0,1,...,7},
    xticklabels={},
  ]
    \addplot+[const plot, no marks, draw=black, very thick] coordinates {(0,0)(1.5,0) (1.5,1) (2.5,1) (2.5,0) (3.5,0) (3.5,1) (6.5,1) (6.5,0) (7,0)} node[above, pos=0.05,black]{$h_1$};
    \addplot [only marks,mark=*,mark size=3pt, mark options={fill=white}] table {
    2 1
    4 1
    6 1
};
  \end{axis}
  \end{tikzpicture}
  \begin{tikzpicture}
  \begin{axis}[
    x=0.5cm,y=1cm,
    axis lines=middle, %
    axis y line=none,
    axis line style={-}, %
    xlabel style={at={(ticklabel* cs:1.0)}, anchor= west},
    xlabel={$x$}, %
    ylabel={$y$},
    ymin=0, %
    ymax=1, %
    xmin=0, %
    xmax=7, %
    xtick= {0,1,...,7},
    xticklabels={},
  ]
    \addplot+[const plot, no marks, draw=black, very thick] coordinates {(0,0) (1.5,0) (1.5,1) (4.5,1) (4.5,0) (5.5,0) (5.5,1) (6.5,1) (6.5,0) (7,0)} node[above,pos=0.05,black]{$h_2$};

    \addplot [only marks,mark=*,mark size=3pt, mark options={fill=white}] table {
    2 1
    4 1
    6 1
};
  \end{axis}
  \end{tikzpicture}
    \caption{\Cref{ex:non-int} of non intersection-closed classes and two possible functions $h_1$ and $h_2$ the learner can output to be consistent after having observed 3 samples with label 1.}
    \label{fig:example_non-int}
\end{figure}

\intersectionadv*
\begin{proof}
Proving the direction that \(\HC\) is learnable if there exists an \(f\)-representation of \(\HC\) such that \(\HC^f\) is intersection closed is an immediate consequence of the following observation. Choosing that \(f\)-representation and running~\Cref{alg:closure_explicit} yields a closure algorithm on an intersection closed class and thus \(h_t\) always remains within \(\HC^f\). Then the learner always outputs \(h_t\oplus f\in \HC\).

The proof proceeds in two main steps. First, we reduce the problem by
showing that the learnability of a class $\HC$ from an arbitrary
starting hypothesis $h_1$ is equivalent to the learnability of a
corresponding ``flipped'' class $\HC^f$ when starting from $h_{\text{min}}$. Second, we
prove by contradiction that any class that is
learnable starting from $h_{\text{min}}\in\HC$ must be intersection-closed.

Let $\cA$ be a proper learning algorithm that guarantees a finite
mistake bound for $\HC$, starting from a hypothesis $h_1 \in \HC$. Note that by definition of $h_{\min}=\cap_{h\in\HC} h\subseteq h_1$.  
Set $f:=h_1\setminus h_{\text{min}}$ and define the $f$-representation of $\HC$ as $\HC^f = \{h^f \mid h \in \HC\}$, where now $(h_1)^f=h_{\min}$ by definition. This mapping is a bijection, and there exists a corresponding algorithm $\cA$ for $\HC^f$ whose starting
hypothesis is $(h_1)^f  = h_{\min}\in\HC$. $\cA'$ also flips the received samples $(x_i,y_i^f)$ where $y_i^f=1-y_i$ when $f(x_i)=1$ and $y_i^f=y_i$ else. Thus w.l.o.g., we can restrict ourselves to the setting where the learner starts with $h_{\min}$ by considering the corresponding $f$-representation of the class and a learner $\cA'$ that flips the labels and predicts with hypothesis in $\HC^f$.\\
Thus, we will now prove that any class $\HC$, which is learnable starting
from $h_{\min}\in\HC$, must be intersection-closed. By contradiction, assume there exist, $h_1,h_2\in\HC$ such that $h_1'\cap h_2'=k'\notin\HC$.
The adaptive adversary chooses a target which allows playing samples $S \subseteq k'$ with label 1 s.t. $S$ is chosen as a minimal subset of $k'$ such that
 the closure is not a hypothesis from $\HC$: \[S\in\arg\min_{S'\subseteq k'}\{\clos_\HC(S')\notin\HC\}.\] 
Up to this point $t':=|S|$ the learner makes at most $t'=|S|\leq|k'|$ mistakes. \\
As the learner started from $h_{\min}\in\HC$ the reliable version space of consistent hypothesis with $S$ is $\VS_{t'}=\{h\in\HC \mid \clos_\HC(S')\subsetneq h\}$. 
A proper learner that updates the hypothesis from this point on, will have to choose some $h_{o}\in \VS_{t'}$ (choosing some $h_{o}\notin\VS_{t'}$ is a worse strategy and can be used by an adaptive adversary to incur errors in every round, see \Cref{cor:hypothesis_chain}).  
However, by assumption of non-intersection closed $\HC$, there exists a point $z\in h_{o}\setminus \clos_\HC(S')$ as $\clos_\HC(S')\notin \HC$ and $h_{o}\in\HC$. 
Therefore, we have replay hypothesis $h_{o}(z)=1$ and $h_{\min}(z)=0$. If this would not hold, i.e. $h_{\min}(z)=1$ then $\forall h\in\HC$ one had $h(z)=1$ which contradicts $z\notin \clos_{\HC}(S)$ by definition of closure. Additionally, \[z\notin \clos_\HC(S')\subseteq k'=h_1'\cap h_2'\] implies that there exists an $i\in\{1,2\}$ such that $h_i'(z)=0$ and both $h_{o}$ and $h_i'$ are in $\VS_{t'}$. This satisfies the conditions for a non-empty \gls{trap} as defined in \Cref{defn:trapt}, as $z\in\Trp_{t'}$. 
Thus, by \Cref{thm:no_trap}, the adaptive adversary can force the learner to make $\Omega(T)$   mistakes (note that potentially $\Omega(T-|k'|)$, depending on how the learner predicts on the first samples, but $|k'|$ is considered constant).

\end{proof}

\end{document}